\newtheorem{theorem}{Theorem}
\newtheorem{proposition}[theorem]{Proposition}
\newcommand{\inlineeqnum}{\refstepcounter{equation}~~\mbox{(\theequation)}}
\title{Evolutionary Action Selection \\ for Gradient-based Policy Learning}
\author{%
  Yan Ma \\
  Fudan University\\
  Shanghai, China\\
  \texttt{20210860024@fudan.edu.cn} \\
  \And
  Tianxing Liu \\
  Fudan University\\
  Shanghai, China\\
  \texttt{liutianxing@126.com} \\
  \And
  Bingsheng Wei \\
  Fudan University\\
  Shanghai, China\\
  \texttt{weibingsheng@fudan.edu.cn} \\
  \And
  Yi Liu \\
  Fudan University\\
  Shanghai, China\\
  \texttt{liuyi\_@fudan.edu.cn} \\
  \And
  Kang Xu \\
  Fudan University\\
  Shanghai, China\\
  \texttt{21210860020@m.fudan.edu.cn} \\
  \And
  Wei Li\\
  Fudan University\\
  Shanghai, China\\
  \texttt{fd\_liwei@fudan.edu.cn} \\
}
\begin{document}

\maketitle

\begin{abstract}
Evolutionary Algorithms (EAs) and Deep Reinforcement Learning (DRL) have recently been integrated to take the advantage of the both methods for better exploration and exploitation.
The evolutionary part in these hybrid methods maintains a population of policy networks.
However, existing methods focus on optimizing the parameters of policy network, which is usually high-dimensional and tricky for EA.
In this paper, we shift the target of evolution from high-dimensional parameter space to low-dimensional action space.
We propose Evolutionary Action Selection-Twin Delayed Deep Deterministic Policy Gradient (EAS-TD3), a novel hybrid method of EA and DRL.
In EAS, we focus on optimizing the action chosen by the policy network and attempt to obtain high-quality actions to promote policy learning through an evolutionary algorithm.
We conduct several experiments on challenging continuous control tasks.
The result shows that EAS-TD3 shows superior performance over other state-of-art methods.
\end{abstract}

\section{Introduction}
\label{sec:intro}

Deep Reinforcement Learning (DRL) has achieved impressive performance in Go~\citep{c:Go}, Atari games~\citep{c:Atari}, and continuous control tasks~\citep{rllab, SAC_application}. 
The purpose of DRL is to train an optimal, or nearly-optimal policy network that maximizes the reward function or other user-provided reinforcement signal. Recently, Evolutionary Algorithms (EA) have been applied to search the parameter space of neural networks to train policies and showed competitive results as DRL~\cite{ES, neurevolution}.
Some work compares DRL and EA~\cite{DRLvsES, PPO}, highlighting their respective pros and cons.
However, an emerging research direction tends to integrate them to promote each other for better policy search~\cite{erl_survey, erl_review}, in that they have several complementary properties and can be integrated together to benefit from the best of both worlds.
As one of the pioneers, \citet{ERL} proposed Evolutionary Reinforcement Learning (ERL), which has shown promising results in continuous control tasks.
ERL maintains a population containing $n$ policy networks trained by EA and one policy network trained by DRL.
The policy population provides a large number of samples $(s,a,r,s')$ to the RL policy and RL policy will be injected into the population periodically to replace the poorer individual.
With the help of EA's global optimization ability, ERL explores the parameter space of the policy network to search for outstanding solutions.
Most variants of the ERL~\cite{CEM-RL, CERL, PDERL, Super-RL, ESAC} follow the parameter space of the policy network as the target of evolution.
However, taking parameters of the policy network as the target of evolution may pose a potential problem.
That is, EA has to optimize a high dimensional parameter space, which is chanllenging for EA.
It is well known that EA is deficient in the optimization of high-dimensional spaces.
Although EA can be leveraged to train policy networks, it suffers from low sample efficiency since it is gradient-free.
Corresponding to ERL methods, the contribution of the evolutionary part will be limited.
In particular, we find that the evolutionary part may weaken the performance of the overall method at high dimensions of the parameter space.
As illustrated in ~\cref{fig:toy_c} (see more details in~\cref{sec:toy_example}), the performance of ERL method decays with the increase of policy parameters.
The problem in ERL methods is attributed to the use of EA to directly train the strategy network.
In a nutshell, taking the parameter space of the policy network as the target of evolution will limit the contribution of EA to such hybrid methods.
The motivation of this work is to transfer the evolutionary target from the high dimensional parameter space to a low dimensional space where EA is more proficient and avoids the problem mentioned earlier so as to better serve the best of both worlds.
\begin{figure}[htb]
 \centering
 \begin{subfigure}[t]{0.24\columnwidth}
     \centering
     \includegraphics[width=\columnwidth]{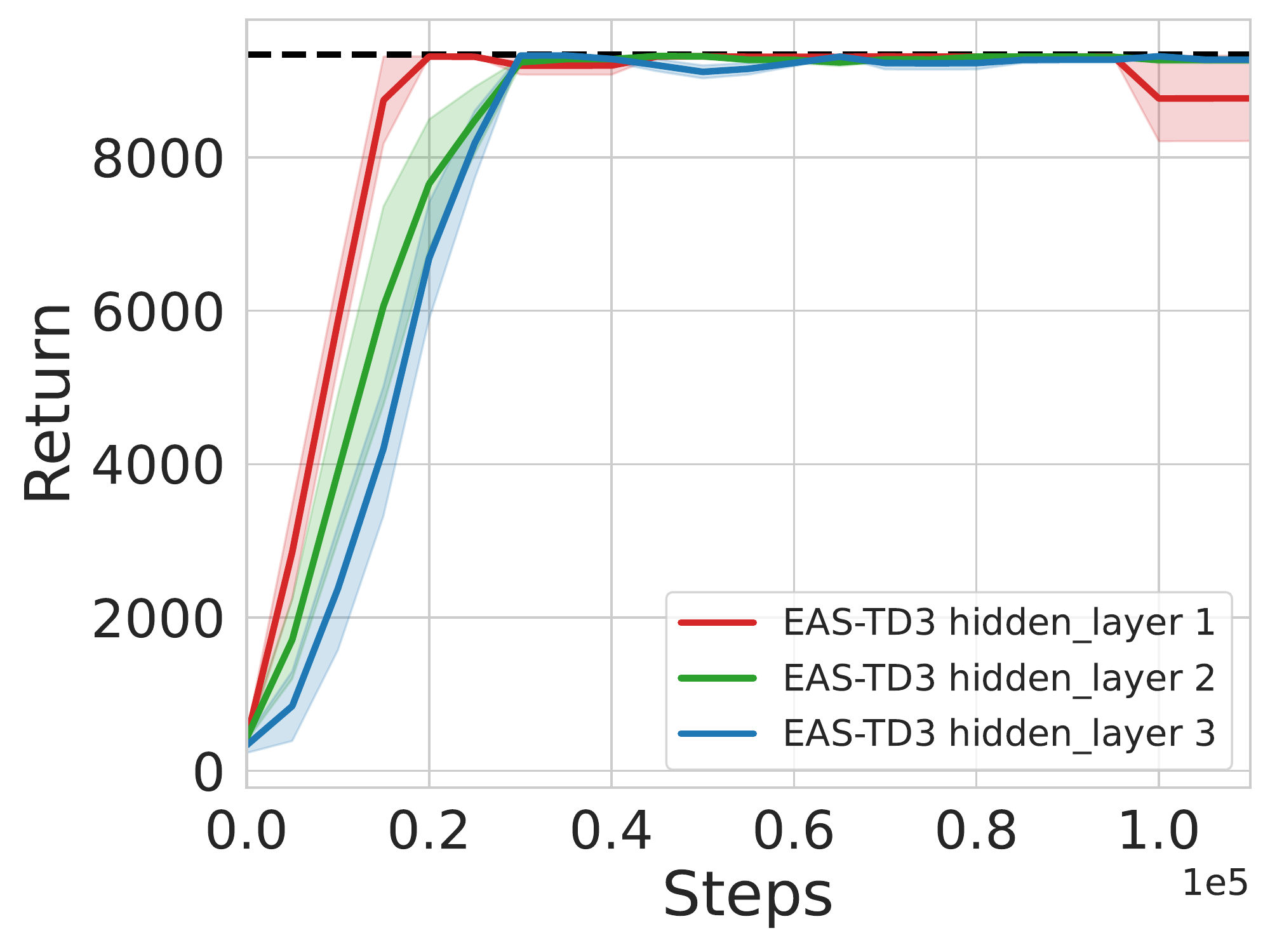}
     \vspace{-13pt}
     \caption{EAS-TD3}
     \label{fig:toy_a}
 \end{subfigure}%
 \hspace{0.1in}
 \begin{subfigure}[t]{0.24\columnwidth}
     \centering
     \includegraphics[width=\columnwidth]{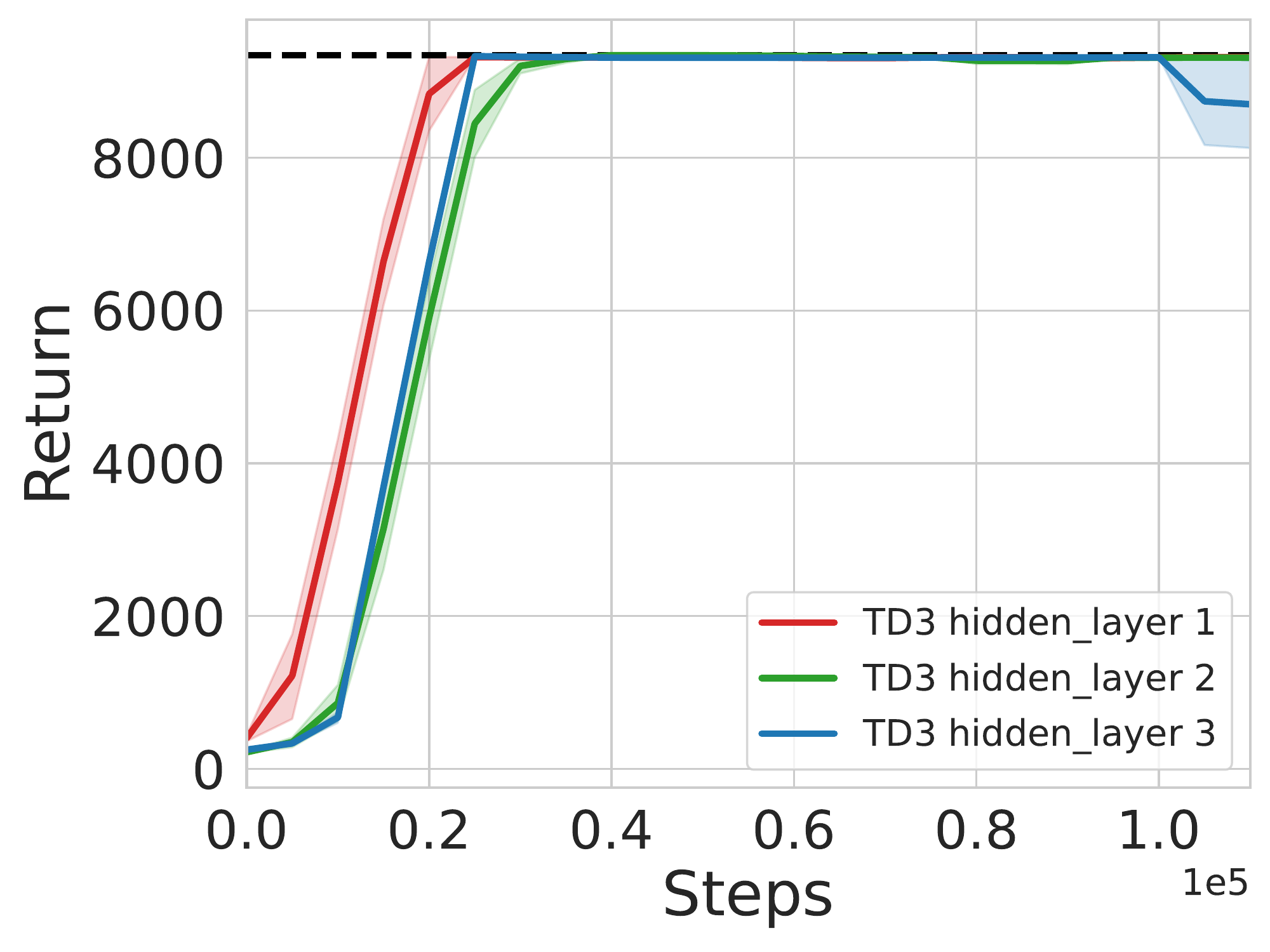}
     \vspace{-13pt}
     \caption{TD3}
     \label{fig:toy_b}
 \end{subfigure}%
 \hspace{0.1in}
 \begin{subfigure}[t]{0.24\columnwidth}
     \centering
     \includegraphics[width=\columnwidth]{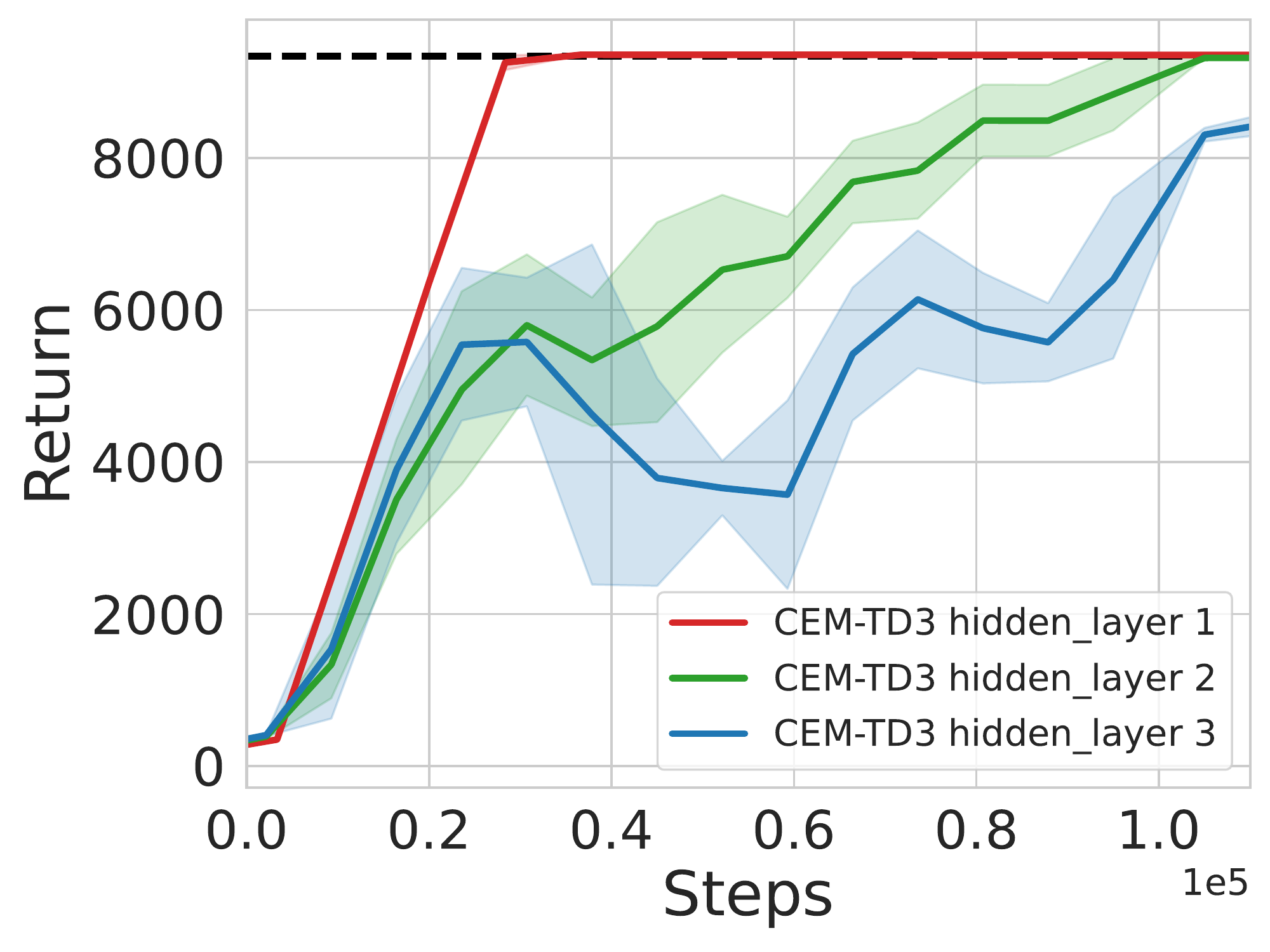}
     \vspace{-13pt}
     \caption{CEM-TD3}
     \label{fig:toy_c}
 \end{subfigure}%
\caption{Comparative Experiment of EAS-TD3 (our work), CEM-TD3 (one of the sota methods in ERL variants), TD3 on Inverted Double Pendulum task from Gym~\cite{gym} with different policy network parameters. 
The performance of CEM-TD3 decreases while that of EAS-TD3 remains stable.}
\label{fig:toy_example}
\vspace{-13pt}
\end{figure}

Our goal is unambiguous, which aims to train a good policy to maximize the cumulative reward of the given task.
A good policy $\mu(a|s)$ means it can choose a good action $a$ according to the state $s$ to maximize the cumulative reward.
The quality of the action largely determines the quality of the policy.
More importantly, the action space is relatively low dimensional and generally will not change for a given task.
Thus, we intend to take the action space as the target of evolution.
Concretely speaking, we leverage EA to optimize the action chosen by the RL policy and obtain better evolutionary actions.
The policy network trained by RL collects samples in the environment.
EA extracts actions from samples and evolves better evolutionary actions to promote the RL policy learning.
This maintains the flow of information between the EA part and RL part.

Based on the above insight, we propose Evolutionary Action Selection (EAS).
As shown in~\cref{fig:EAS}, EAS utilizes actions selected by RL policy to form a population and utilizes Particle Swarm Optimization (PSO)~\cite{PSO} to evolve the action population from generation to generation.
Finally, we obtain better evolutionary actions, which can promote the learning process through the evolutionary action gradient.
We choose PSO for two reasons.
Firstly, it has been widely used in many fields and proved to be effective in practice~\cite{PSOapplication}.
Secondly, it is easy to follow and will not add too much computational burden.
In addition, PSO can be replaced by other evolutionary algorithms, such as genetic algorithm~\cite{GA}, cross-entropy method~\cite{CEM}, and so on.

Our contributions are threefold:
(1) We empirically demonstrate that taking the policy network parameter space as the evolutionary target may lead to performance degradation.
(2) We transfer the target of evolution from high-dimensional parameter space to low-dimensional action space and propose a simple and effective mechanism to evolve action.
(3) We apply EAS to TD3~\cite{TD3} as EAS-TD3 and conduct a series of empirical studies on a benchmark suite of continuous control tasks to prove the feasibility and superiority of our approach.
\section{Related Work}\label{sec:2}
The idea of incorporating learning with evolution has been around for many years~\cite{EARL1992, EARL1999, EARL2006, EARL2011}.
With the brilliance of reinforcement learning, recent literature~\cite{GPO, GEP-PG, Super-RL} has begun to revisit the combination of the two to improve the performance of the overall approach.

As mentioned earlier, this paper is related to the recently proposed Evolutionary Reinforcement Learning (ERL)~\cite{ERL} framework.
ERL combines Genetic Algorithm (GA)~\cite{GA} with the off-policy DRL algorithm (DDPG)~\cite{DDPG} and incorporates the two processes to run concurrently formulating a framework.
Specifically, ERL maintains a policy population trained by EA and a policy network trained by RL.
By maintaining interactive information flow between EA and RL, the performance of the overall method is promoted.
The framework of ERL has triggered a variety of variants, which makes the efficient combination of EA and RL an emerging research direction for both the EA and RL community.
Collaborative Evolutionary Reinforcement Learning (CERL)~\cite{CERL} is the follow-up work of ERL. CERL attempts to train multiple policy networks with different hyperparameters to address the DRL's sensitivity to hyperparameters.
Moreover, Proximal Distilled Evolutionary Reinforcement Learning (PDERL)~\cite{PDERL} attempts to figure out the catastrophic forgetting of the neural network caused by the genetic operator used in ERL.
CEM-RL~\cite{CEM-RL} removes the single policy network trained by RL and instead allows half of the policy networks in the population to be trained directly by RL and the other half by cross entropy method (CEM)~\cite{CEM}.
This approach magnifies the impact of gradient-based policy learning methods on the evolutionary population, which improves the sample efficiency of the ERL framework.
AES-RL~\cite{AES-RL} proposes an efficient asynchronous method for integrating evolutionary and gradient-based policy search, which shortens the training time.
QD-PG~\cite{QD-RL} introduces Quality-Diversity (QD) algorithm for RL to address the problem of deceptive reward.
Note that there are some works in RL domain that embody the idea of action improvement~\cite{Qt-Opt, CGP,SAC-CEPO, GRAC, DAO}.
For instance, the QT-Opt algorithm ~\cite{Qt-Opt} randomly samples a batch of actions from the action space and leverages CEM to select the output action.
GRAC~\cite{GRAC} extends the idea of QT-Opt to stochastic policy and constructs two novel losses to make the critic network update more stable and robust.

Our approach focuses on integrating the advantages of EA and DRL to better learn policies.
We shift the target of evolution from high-dimensional policy network parameter space to low-dimensional action space and demonstrate empirically that evolving the action is a favorable alternative to evolving the parameters of the policy network.
%
\begin{figure*}[htb]
\centering
\includegraphics[width=0.85\textwidth]{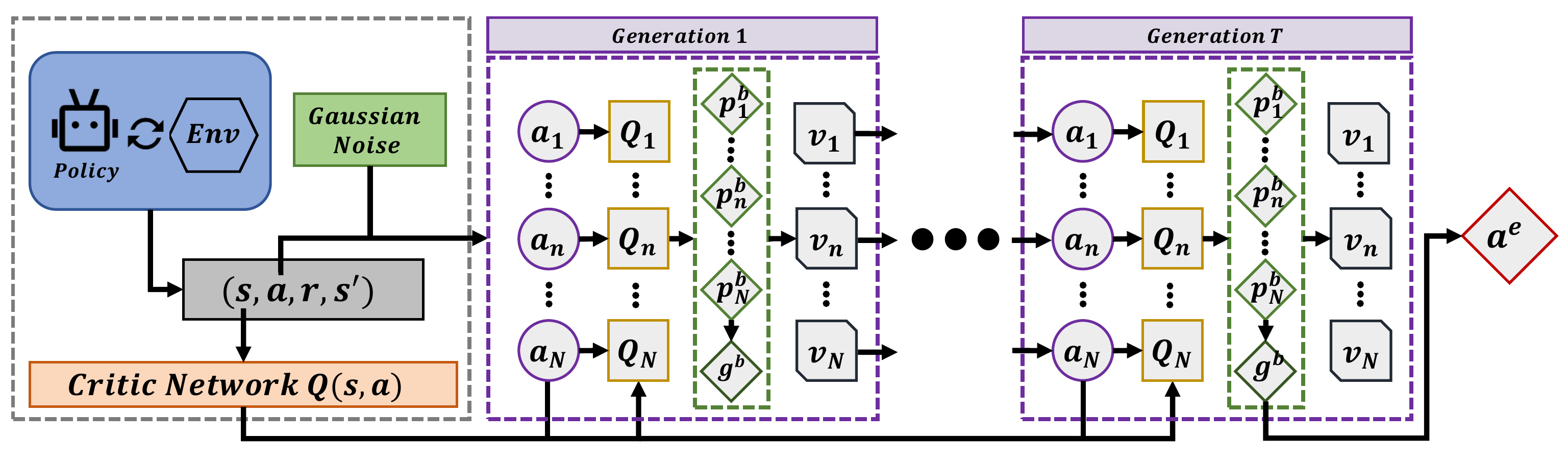}
\caption{The procedure of Evolutionary Action Selection.
Add Gaussian noise to the action in the sample and form the initial population.
The Gaussian noise we add is generally the white noise $\mathcal{N}(0,1)$.
As the fitness evaluator, the critic network is used to generate $Q$ values for the individual of the action population in each generation, from where we update the individual best action $p^b_n$ and global best action $g^b$.
Then, we update the action $a_n$ and the velocity $v_n$.
After several generations, the global best action $g^b$ will be served as the evolutionary action $a^e$.}
\label{fig:EAS}
\end{figure*}
\section{Methodology}\label{sec:3}
In this section, we will present Evolutionary Action Selection (EAS) and integrate EAS into TD3 as EAS-TD3 and enable the RL policy to learn from evolutionary actions.

\subsection{Evolutionary Action Selection (EAS)}
\label{sec:3.2}
Figure~\ref{fig:EAS} illustrates the procedure of EAS, which builds on top of PSO and TD3.
Background of both is described in Appendix~\ref{Appendix:Background}.
Taking the critic network $Q_{\mu_\theta}$ of TD3 as the fitness evaluator, EAS follows the process of PSO and evolves the action $a$ chosen by the current policy network $\mu_\theta(s)$.
The output of EAS is known as the evolutionary action $a^e$ with a higher $Q$ value than action $a$.
The pseudocode of EAS is shown in Algorithm~\ref{alg:1}.
The reasons and details of choosing the critic network $Q_{\mu_\theta}$ as the fitness evaluator are described in Appendix~\ref{sec:3.1}.

In EAS, we first add Gaussian noise to the action $a$ and make it an action set $\mathbb{A}$ containing multiple noisy actions, which serves as the initial population.
The fitness of each action is its $Q$ value generated by the critic network $Q_{\mu_\theta}$.
Secondly, we initialize the velocity vector, which determines the direction and step length when updating actions in the population.
Then, we initialize the personal best action set $\mathcal{P}=(p^b_1...p^b_n...p^b_N)$, which records the best solution of each action in the population found so far.
The best action in $\mathcal{P}$ is called the global best action $g^b$, which represents the best solution found so far.
In each generation, action $a_n^t$ in the population will be evaluated by fitness evaluator $Q_{\mu_\theta}$ to obtain the fitness $Q_n^t$.
Based on the magnitude of $Q_n^t$, we update the personal and global best action.
Moreover, the velocity is updated by Eq.~\ref{eq:velocity}, which subsequently will be used to update actions and get the next generation of population.
In Eq.~\ref{eq:velocity}, inertia weight $\omega$ describes previous velocity's influence
on current velocity.
Acceleration coefficients $c_1$ and $c_2$ represent the acceleration weights toward the personal best action and the global best action.
$r_1$ and $r_2$ are random variables uniformly distributed in $[0, 1]$.
Through several iterations, we can obtain the global best action $g^b$ with the highest $Q$ value searched so far and denote it as the evolutionary action $a^e$.
The relationship of $Q$ value between $a^e$ and $a$ is:
\begin{equation}
Q_{\mu_\theta}(s,a^e)\ge Q_{\mu_\theta}(s,a)
\label{eq:7}
\end{equation}
which reveals that EAS can increase the $Q$ value of the action so that the action will have a higher expected reward.
We claim the changing from $a$ to $a^e$ is the action evolution.
The evolutionary action $a^e$ is better than $a$ and has a higher expected reward.

\begin{algorithm}[ht]
\caption{Evolutionary Action Selection}
\label{alg:1}
\textbf{Input}:~ State $s$, action $a$, critic network $Q_{\mu_\theta}$  \\
\textbf{PSO parameters}:~Inertia weight $\omega$, acceleration coefficients $c_1,c_2$, random coefficients $r_1,r_2$\\
\textbf{Output}:~Evolutionary action $a^e$
\begin{algorithmic}[1] 
\State Extend the action $a$ with Gaussian noise $\epsilon$ to form the initial action population $\mathbb{A}=(a_1...a_n...a_N)$, $a_n=a+\epsilon_n,\epsilon\in\mathcal{N}(0,\sigma)$, $N$ is the number of actions
\State Initialize the velocity of action $\mathcal{V}=(v_1...v_n...v_N )$, $|v_n|=|a_n|=D$, $v_n\in[-v_{max}^D,v_{max}^D]$ 
\State Initialize personal best action set $\mathcal{P}=(p^b_1...p^b_n...p^b_N)$ and global best action $g^b$
\For{$t = 1$ \textbf{to} $T$}
\For{$n = 1$ \textbf{to} $N$}
\State $Q_n^t\gets Q_{\mu_\theta}(s,a_n^t)$ \Comment{Evaluate the action}
\State $p^b_n \gets a_n^t;~~\textbf{\emph{if}}~~~Q_n^t~>Q_{p_n^b}$ \Comment{Update the personal best action}
\State $ g^b \gets p^b_n;~~\textbf{\emph{if}}~~~Q_{p^b_n}>Q_{g^b} $ \Comment{Update the global best action}
\State
$ v_n^{t+1} = \omega*v_n^t +c_1*r_1*(p^b_n-a_n^t)+c_2*r_2*(g^b-a_n^t) \inlineeqnum\label{eq:velocity} $ \Comment{Update the velocity}
\State $a_n^{t+1}=a_n^{t}+v_n^{t+1} \inlineeqnum\label{eq:update_action}$ \Comment{Update the action}
\EndFor
\EndFor
\State Obtain the global best action $g^b$, representing evolutionary action $a^e$
\end{algorithmic}
\end{algorithm}

\subsection{EAS-TD3 Framework} \label{sec:3.4}
Figure~\ref{fig:3} illustrates a diagram of EAS-TD3.
See Appendix~\ref{Appendix:EAS-TD3 Pseudocode} for a pseudocode.
At each timestep $t$, the policy observes $s_t$ and outputs action $a_t$. Then, we receive a reward $r_t$ and environment transitions to next state $s_{t+1}$. These four elements make up the sample $(s_t,a_t,r_t,s_{t+1})$, which will be stored into the replay buffer $R$.
Then, EAS performs evolution on action $a$ to obtain evolutionary action $a^e$, which will be stored into an archive $\mathcal{A}$.
We draw the same mini-batches from $\mathcal{R}$ and $\mathcal{A}$ and update the current policy $\mu_\theta$ with deterministic policy gradient~\cite{DPG} and evolutionary action gradient (will be described below).
EAS adopts the way of delayed policy updates: one policy update for two $Q$ function updates, which is the same as TD3.

Through EAS, we obtain evolutionary actions, storing their corresponding state-action pairs $(s, a^e)$ into an archive $\mathcal{A}$. As mentioned in \cref{sec:3.2}, the evolutionary action has a higher expected reward than the original action. 
Consequently, we intend to make the action space of RL policy similar to the space of evolutionary actions so that the evolutionary actions can contribute to the RL policy learning.
Due to the idea given above, we construct a loss as below:
\begin{equation}
L_{evo}(\theta, \mathcal{A})=\mathbb{E}_{(s_i,a_i^e)\sim \mathcal{A}}\left[\left\|\mu_\theta(s_i)-a_i^e\right\|^2\right]
\label{eq:9}
\end{equation}
where $s_i$ and $a_i^e$ represent the state and evolutionary action sampled from $A$ respectively, and $\theta$ represents the learning parameters in RL policy $\mu_\theta$.
We call $L_{evo}$ \textit{evolutionary action gradient}.
Furthermore, we weigh the constructed loss with an extra $Q_{filter}$ as proposed in~\citet{Q-filter}:
\begin{equation}
Q_{filter}=\left\{
\begin{aligned}
1 & , ~~~if~~~Q_{\mu_\theta}(s_i,a^e_i) > Q_{\mu_\theta}(s_i, \mu_\theta(s_i)), \\
0 & , ~~~ else.
\end{aligned}
\right.
\label{eq:10}
\end{equation}

The purpose of $Q_{filter}$ is to drop out $L_{evo}(\theta, \mathcal{A})$ when the action chosen by the current RL policy is superior to the evolutionary actions in archive $\mathcal{A}$.
The reason is that if an evolutionary action generated a long time ago is sampled to update parameters, the action chosen by the current policy may be better than the previous evolutionary action.
Thus, we need to filter out those outdated evolutionary actions.
We periodically draw a batch of state-action pairs from $\mathcal{A}$ and utilize Eq.~\ref{eq:11} to update parameters.
\begin{equation}
\nabla_\theta L_{Q_{filter}evo}=Q_{filter}\nabla_\theta L_{evo}
    \label{eq:11}
\end{equation}

EAS promotes the evolution of actions, which can be used to guide strategy learning.
The introduction of $Q_{filter}$ effectively avoids a poor direction for learning when the action chosen by the current policy is superior to the previous evolutionary action in the archive.
We refer to the dual integration with TD3 as Evolutionary Action Selection-Twin Delayed Deep Deterministic Policy Gradients (EAS-TD3). 
\subsection{Theoretical Insights into EAS-TD3}
\label{sec:theoretical_insight}
Formally, we treat EAS as a virtual policy $\mu_e$, which can generate evolutionary action $a^e$ based on the action $a$ selected by the current learning policy $\mu_\theta$ (refer to TD3's policy).
Given that we have Proposition~\ref{proposition:1} (see Appendix~\ref{Appendix:Proof} for proof), where $Q_{\mu_e}$ and $Q_{\mu_\theta}$
are the state-action value function of corresponding policies, $p$ is the transition probability.
It indicates that $\mathbb{E}_{a^e\sim \mu_e(s)}[Q_{\mu_e}(s,a^e)]$ is greater than $\mathbb{E}_{a\sim \mu_\theta(s)}[Q_{\mu_\theta}(s,a)]$.
That is, $\mu_e$ outperforms $\mu_\theta$. Evolutionary actions generated by EAS have higher expected rewards.
We employ an archive to store evolutionary actions for reuse.
With the guidance of evolutionary action gradient (Eq.~\ref{eq:9}), evolutionary actions drive the current learning policy towards the state-action space with higher expected rewards as illustrated in Figure~\ref{fig:6} (see Section~\ref{subsec:evo_act_eval} for detailed explanation).
With the $Q$-filter, even in the worst case, all evolutionary actions sampled from the archive are filtered and Eq.~\ref{eq:9} drops to zero. It will \textbf{NOT} hinder the policy learning.
In summary, compared with TD3, we additionally introduce evolutionary action to guide the policy learning towards the state-action space with higher expected rewards.

\begin{proposition}
\label{proposition:1}
EAS policy $\mu_e$ optimizes the action $a$ (generated by $\mu_\theta$) in the direction of increasing $Q_{\mu_\theta}$ (estimated by the critic) to obtain $a^e$.
Hence, for arbitrary state $s$, there exists inequality $\xi$: $\mathbb{E}_{a^e \sim\mu_e(s)}[Q_{\mu_\theta}  (s,a^e)] \ge \mathbb{E}_{a\sim\mu_\theta(s)}[Q_{\mu_\theta} (s,a)]$. Then, we hold that:
\begin{equation*}
\mathbb{E}_{a^e\sim\mu_e(s)}[Q_{\mu_e}(s,a^e)]\ge\mathbb{E}_{a\sim\mu_\theta(s)}[Q_{\mu_\theta}(s,a)]
\label{eq:appendix_proposition_1}
\end{equation*}
\end{proposition}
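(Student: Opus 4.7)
The plan is to recognize Proposition~\ref{proposition:1} as a reformulation of the classical Policy Improvement Theorem, where hypothesis $\xi$ plays the role of the one-step greedification guarantee. The key insight is that the left-hand side of the target inequality is the state-value function $V_{\mu_e}(s)$ under the evolved policy, the right-hand side is $V_{\mu_\theta}(s)$ under the base policy, and $\xi$ furnishes the one-step improvement needed to bootstrap the Bellman recursion.

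Concretely, I would first introduce shorthand $V_{\mu_\theta}(s) = \mathbb{E}_{a\sim\mu_\theta(s)}[Q_{\mu_\theta}(s,a)]$ and $V_{\mu_e}(s) = \mathbb{E}_{a^e\sim\mu_e(s)}[Q_{\mu_e}(s,a^e)]$, so that the target reduces to showing $V_{\mu_e}(s)\ge V_{\mu_\theta}(s)$ for every $s$. Then I would rewrite $\xi$ as $V_{\mu_\theta}(s) \le \mathbb{E}_{a^e\sim\mu_e(s)}[Q_{\mu_\theta}(s,a^e)]$ and expand the right-hand side via the Bellman equation $Q_{\mu_\theta}(s,a^e) = r(s,a^e) + \gamma\,\mathbb{E}_{s'\sim p(\cdot\mid s,a^e)}[V_{\mu_\theta}(s')]$.

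Applying $\xi$ to $V_{\mu_\theta}(s')$ and expanding by Bellman again yields, after $k$ unrollings,
\begin{equation*}
V_{\mu_\theta}(s) \;\le\; \mathbb{E}_{\mu_e}\!\left[\sum_{t=0}^{k-1}\gamma^{t}\, r(s_t,a^e_t) \;+\; \gamma^{k}\, V_{\mu_\theta}(s_k)\,\Big|\, s_0=s\right],
\end{equation*}
where the expectation is taken over trajectories generated by following $\mu_e$ and $p$. Under the standard assumptions of discount $\gamma\in[0,1)$ and bounded rewards, the residual term $\gamma^k V_{\mu_\theta}(s_k)$ vanishes as $k\to\infty$, while the partial sum converges to $V_{\mu_e}(s)$ by definition of the value function under $\mu_e$. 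This gives $V_{\mu_\theta}(s)\le V_{\mu_e}(s)$ pointwise, which is the desired conclusion after reinstating the expectations.

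The only delicate step is the recursive application of $\xi$ under the transition kernel induced by $\mu_e$: one must check that $\xi$ holds at every successor state $s'$ encountered, not just at the initial $s$. Since the proposition assumes $\xi$ for \emph{arbitrary} $s$, this is immediate, and the limit interchange is justified by dominated convergence under bounded rewards. Thus the main conceptual obstacle is simply casting EAS as a stationary virtual policy $\mu_e$ so that $Q_{\mu_e}$ and its Bellman equation are well-defined; once that modeling step is in place, the argument reduces to the textbook policy improvement proof.
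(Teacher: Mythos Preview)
Your proposal is correct and follows essentially the same route as the paper: both recognize the claim as the classical policy improvement argument, start from $V_{\mu_\theta}(s)$, apply $\xi$ to swap $\mu_\theta$ for $\mu_e$ at the current state, expand $Q_{\mu_\theta}$ via Bellman, and iterate so that in the limit the accumulated discounted rewards under $\mu_e$ yield $V_{\mu_e}(s)$. Your write-up is in fact slightly more explicit than the paper's in handling the residual $\gamma^k V_{\mu_\theta}(s_k)$ and in flagging the need for $\xi$ to hold at every successor state, but the underlying structure is identical.
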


\begin{figure}[htb]
    \centering
    \includegraphics[width=0.70\columnwidth]{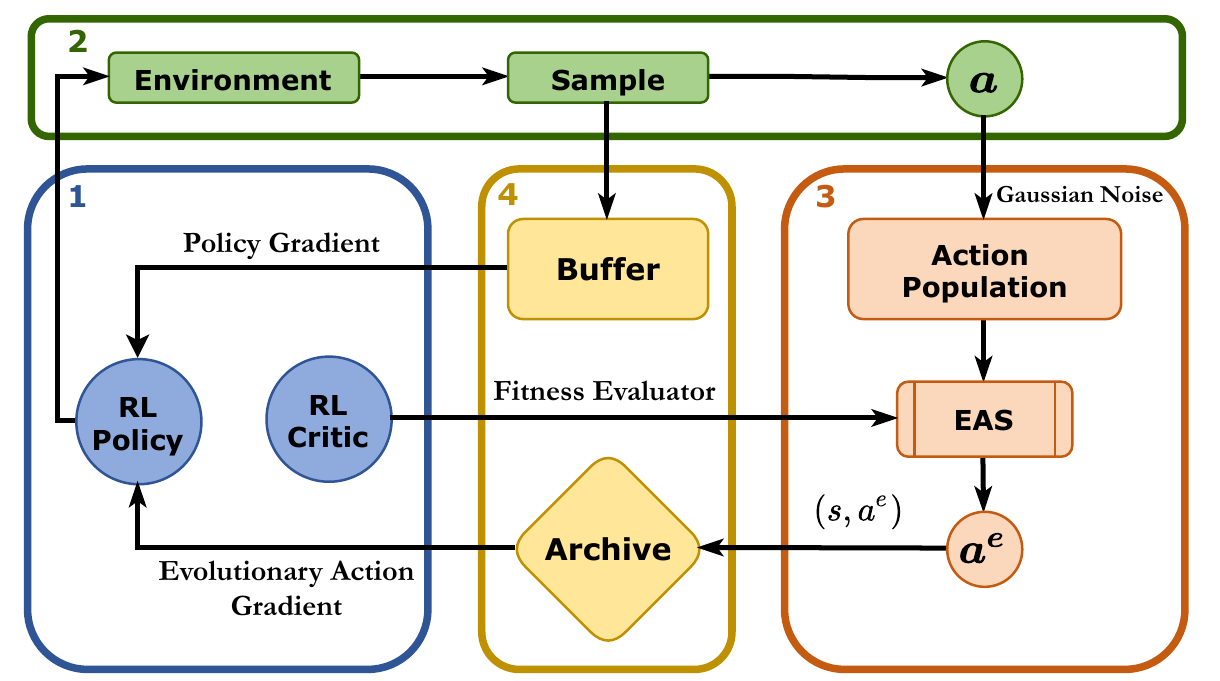}
    \caption{A high-level view of EAS-TD3.
    RL policy interacts with the environment to generate the sample $(s, a, r, s')$, which will be stored in a replay buffer.
    Action $a$ is added with Gaussian noise to form an evolutionary population.
    Then use EAS to update the action population to obtain the evolutionary action $a^e$. The state-action pair $(s,a^e)$ will be stored in an archive.
    We draw batches from the replay buffer and archive to update the current RL policy with policy gradient and evolutionary action gradient.}
    \label{fig:3}
\end{figure}
\section{Experiment}
\label{sec:experiment}

The main purpose of this section is to investigate the mechanism of EAS and the performance of EAS-TD3 compared to other evolutionary reinforcement learning methods.
Firstly, we construct a toy example to confirm our motivation, that the ERL methods utilize EA to optimize high-dimensional parameter space, which will lead to performance degradation.
Then, we conduct extensive experiments on the continuous locomotion tasks from MuJoCo and analyze why EAS-TD3 works better than ERL methods and TD3.
Moreover, we perform ablation studies to analyze the effect of components and hyperparameters.
Finally, we carry out a visual demonstration, which reveals the mechanism of EAS and how evolutionary action promotes strategy learning.

\subsection{A Toy Example}
\label{sec:toy_example}

As mentioned in~\cref{sec:intro}, ERL methods generally choose the high-dimensional parameter space as the target of evolution, which is tricky for EA to optimize and may lead to the collapse performance of the overall approach.
To confirm this, we set up a toy example.
Specifically, we increase the hidden layer of the policy network and carry out experiments on the Inverted Double Pendulum from OpenAI gym.
As a fairly easy continuous control task for most modern algorithms, Inverted Double Pendulum has an 11-dimensional state space and 1-dimensional action space.

Figure~\ref{fig:toy_example} shows the comparative performance of EAS-TD3, CEM-TD3, TD3.
As shown in ~\cref{fig:toy_c}, the performance of CEM-TD3 decays with the increase of policy parameters.
As the increase of hidden layer, the dimension of parameter space needed to be optimized increases dramatically.
This makes it difficult for CEM to search for decent parameters of the policy network, which attenuates the performance of policy population and the contribution of the evolutionary part.
Subsequently, the stagnant policy population produces worthless samples to store in the replay buffer.
The RL part may be drowning in these useless samples.
In short, taking the policy network parameter space as the evolutionary target may lead to performance degradation.
Correspondingly, the performance of EAS-TD3 is not affected as shown in~\cref{fig:toy_a}, since its evolutionary part optimizes the low-dimensional action space, which is rarely changed for a given task.

\subsection{Experimental Setup}
We select four continuous control locomotion tasks from OpenAI Gym~\cite{gym} simulated by MuJoCo~\cite{Mujoco}: HalfCheetah-v3, Walker2d-v3, Ant-v3, Humanoid-v3.
Besides, we modified these four tasks to give a delayed cumulative reward only after every $f_{reward}$ step (or when the episode terminates).
$f_{reward}$ is chosen to 20 for Walker2d and HalfCheeth and 10 for Ant and Humanoid.
We present the average reward and the associated standard deviation over 10 runs.
For each run, we test the learning policy on 10 evaluation episodes every 5000 steps.
In all figures of learning curves, unless specified otherwise, the x-axis represents the number of steps performed in the environment and the y-axis represents the mean return obtained by the policy.

All hyperparameters for TD3 are the same as those in the original paper~\cite{TD3} by default.
Here, we only provide the unique hyper-parameters of the EAS-TD3.
With regard to PSO, inertia weight $\omega$ is 1.2, acceleration coefficients $c_1$,$c_2$ are both 1.5, the number of iterations $T$ is 10, $v_{max}$ is 0.1, random coefficients $r_1$,$r_2$ are random numbers from 0 to 1.
The size of archive $\mathcal{A}$ is 100,000 for all environments.
the action population size is 10.
More details about parameters can be found in Appendix~\ref{Appendix:Hyperparameters}, which contains all hyperparameter settings and descriptions.

Our method is compared against the official implementations for TD3~\cite{TD3_Code}, CEM-TD3 ~\cite{CEM-RL}, CERL~\cite{CERL}, PDERL~\cite{PDERL}, ERL~\cite{ERL}.
These baselines contain a series of studies on the combination of EA and RL.
The population size of all baselines is set to 10.
\begin{figure*}[htb]
     \vspace{10pt}
     \centering
     \begin{subfigure}[t]{0.25\textwidth}
         \centering
         \includegraphics[width=\textwidth]{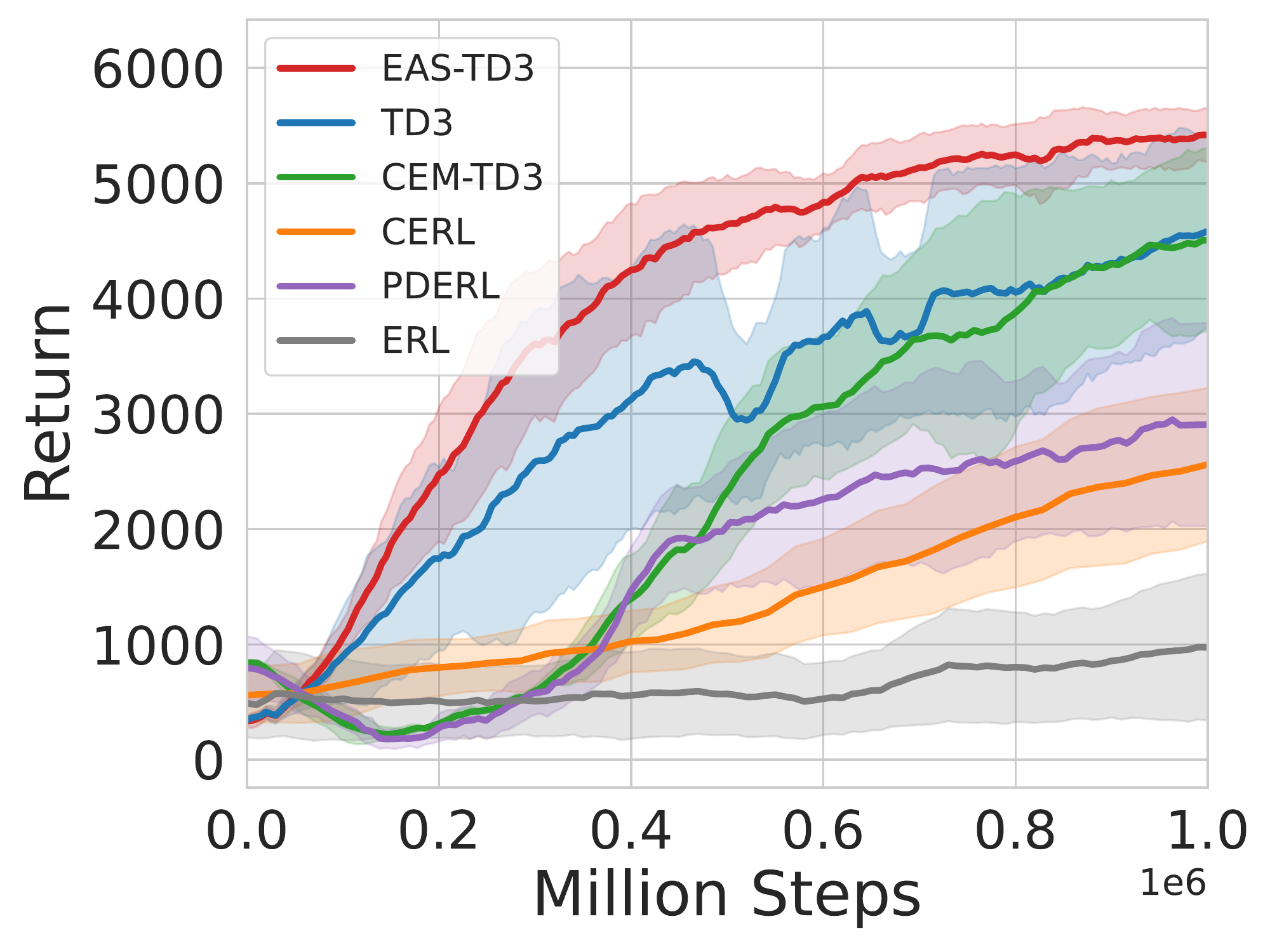}
         \caption{Ant-v3}
         \label{fig:4a}
     \end{subfigure}%
     \hfill
     \begin{subfigure}[t]{0.25\textwidth}
         \centering
         \includegraphics[width=\textwidth]{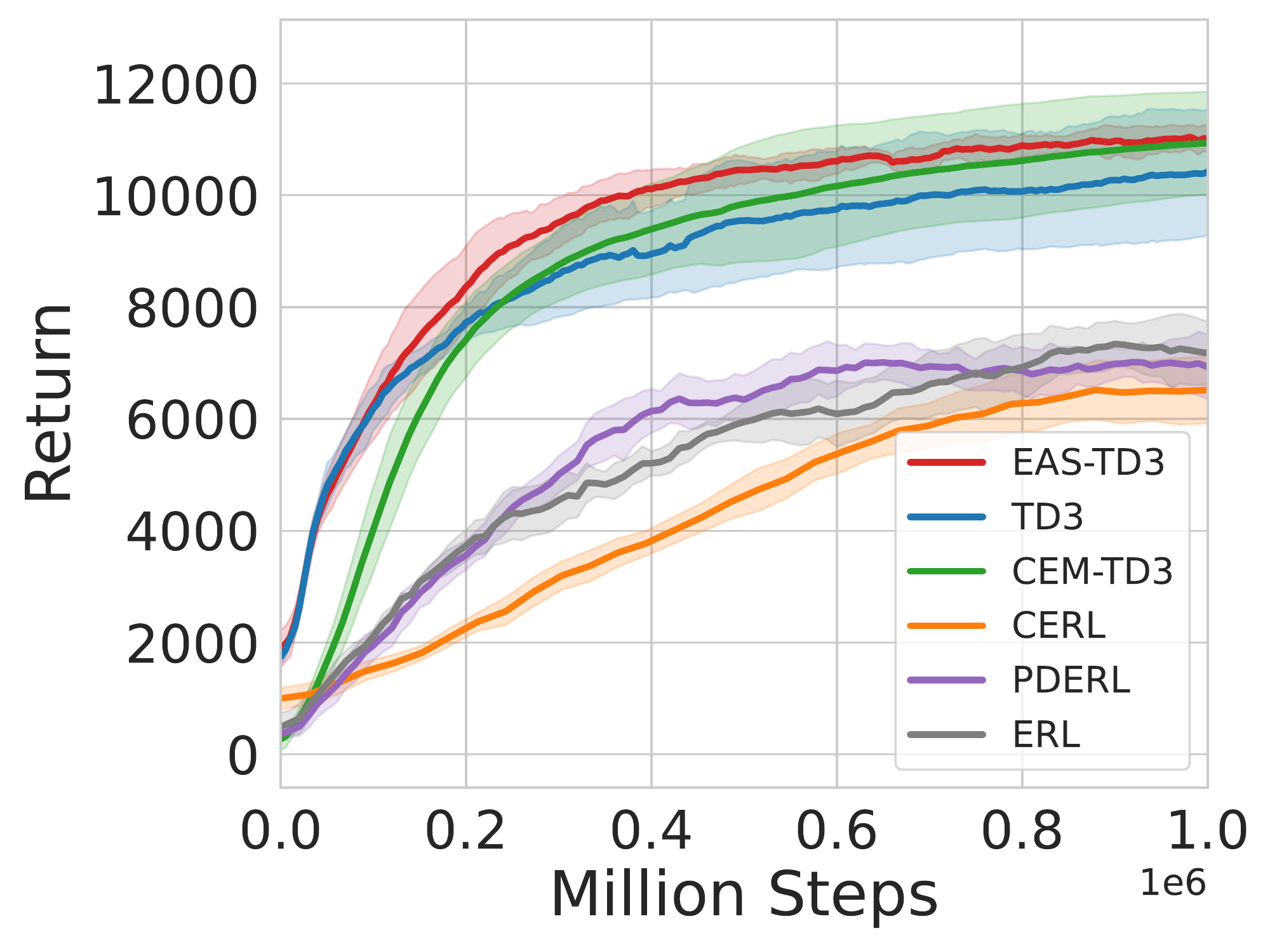}
         \caption{HalfCheetah-v3}
         \label{fig:4b}
     \end{subfigure}%
     \hfill
     \begin{subfigure}[t]{0.25\textwidth}
         \centering
         \includegraphics[width=\textwidth]{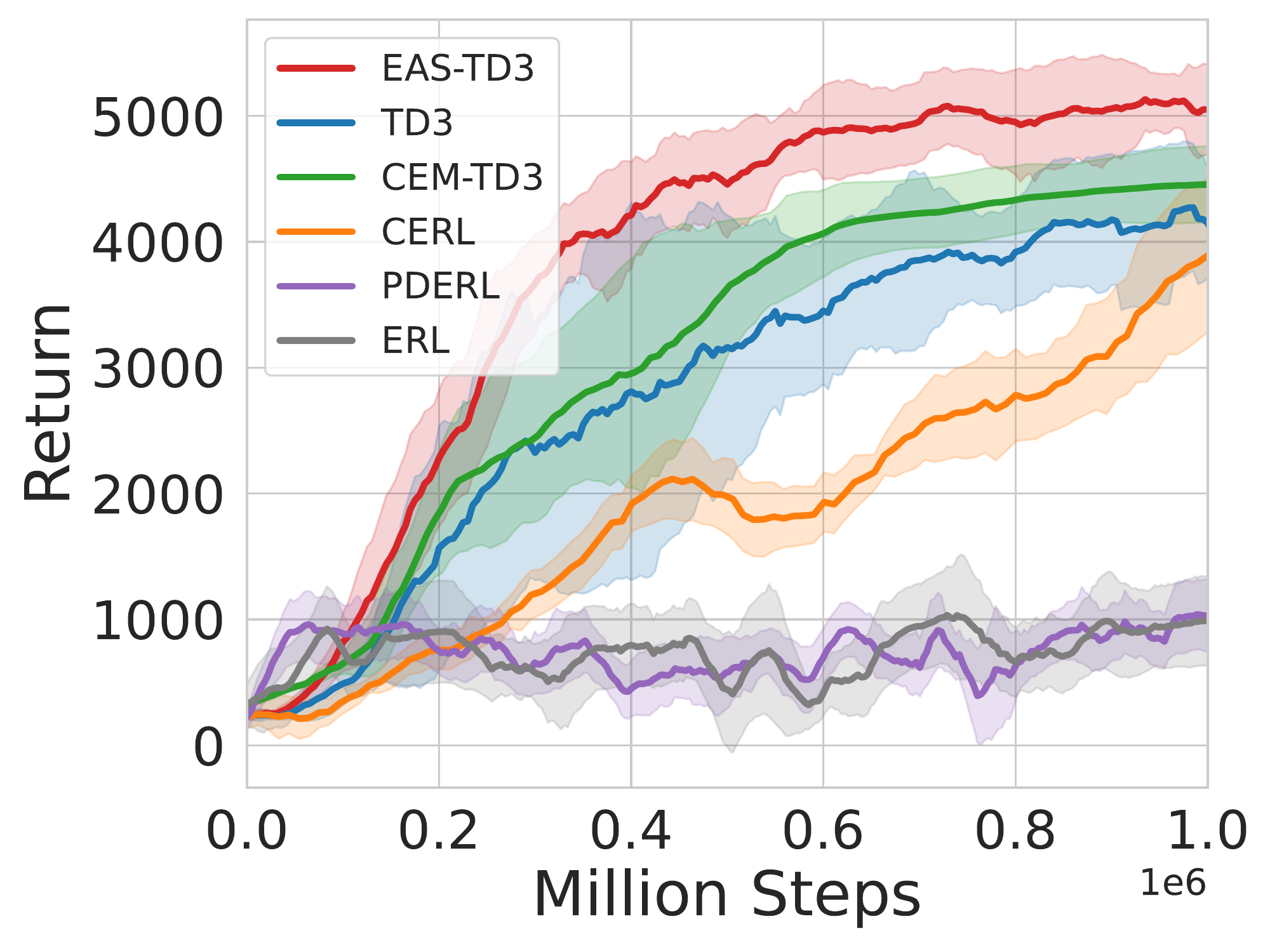}
         \caption{Walker2d-v3}
         \label{fig:4c}
     \end{subfigure}%
     \hfill
     \begin{subfigure}[t]{0.25\textwidth}
         \centering
         \includegraphics[width=\textwidth]{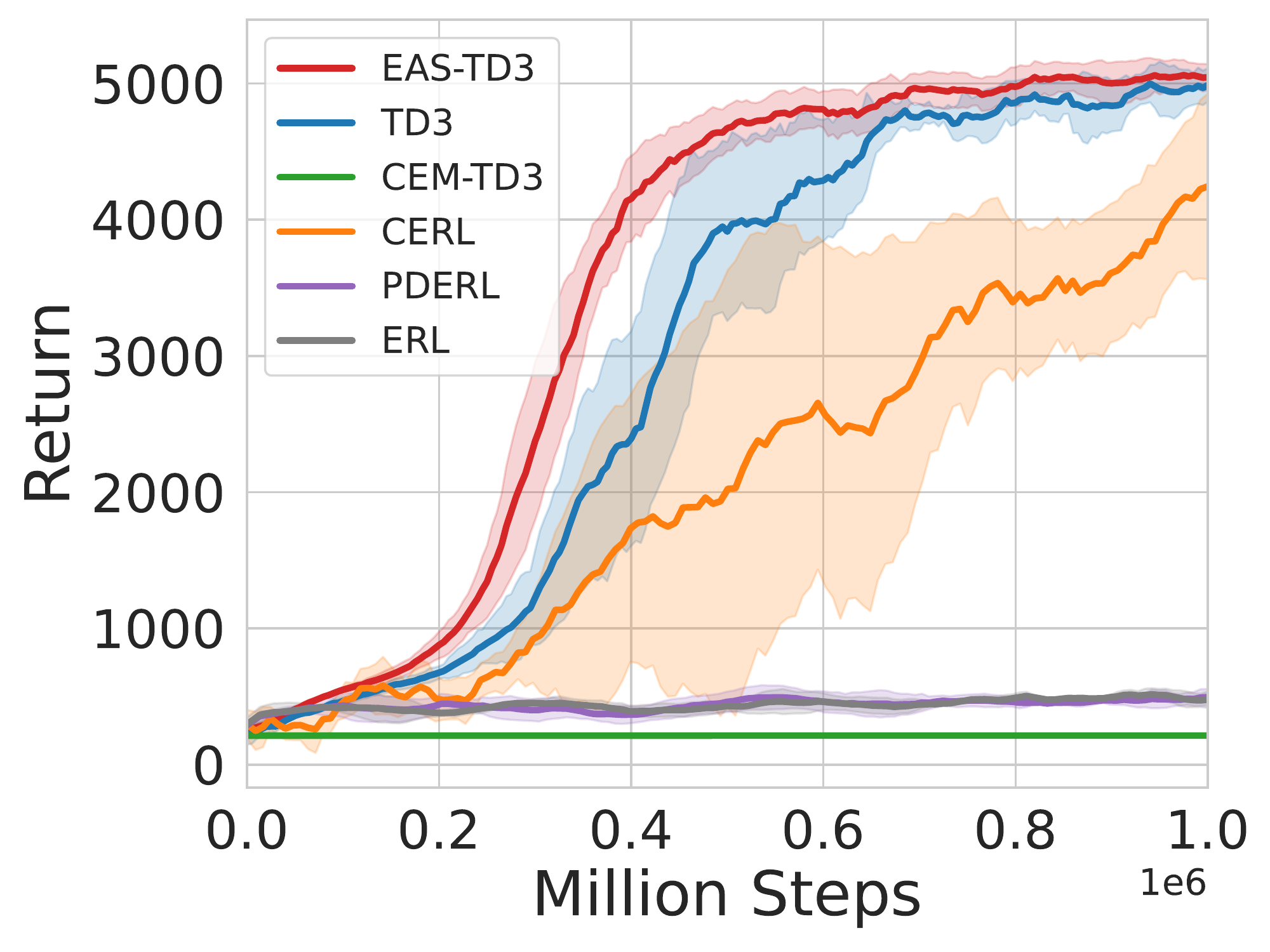}
         \caption{Humanoid-v3}
         \label{fig:4d}
     \end{subfigure}%
     
     \bigskip
     \vspace{10pt}
     
     \begin{subfigure}[ht]{0.25\textwidth}
         \centering
         \includegraphics[width=\textwidth]{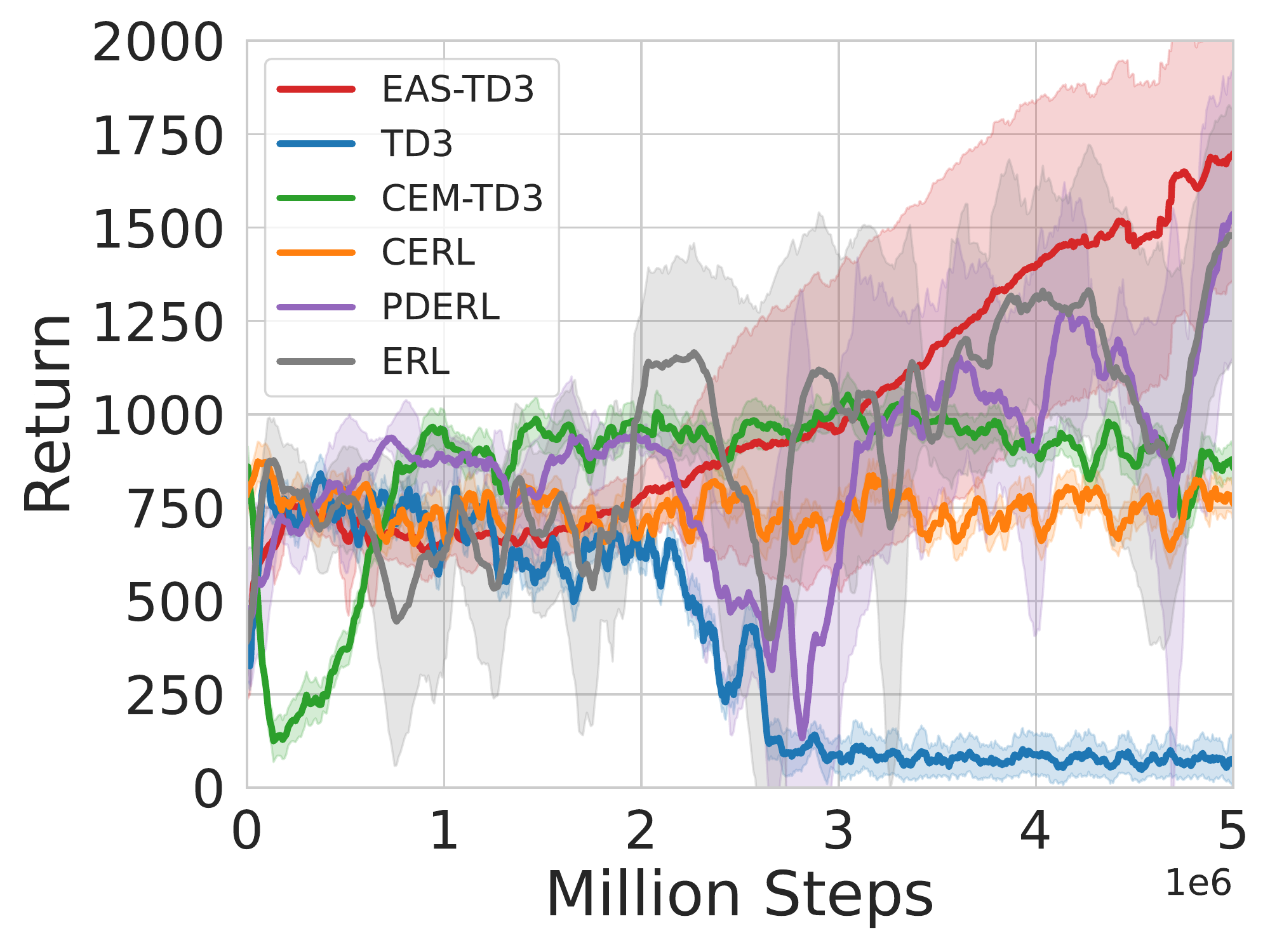}
         \caption{DelayedAnt-v3}
         \label{fig:4e}
     \end{subfigure}%
     \hfill
     \begin{subfigure}[ht]{0.25\textwidth}
         \centering
         \includegraphics[width=\textwidth]{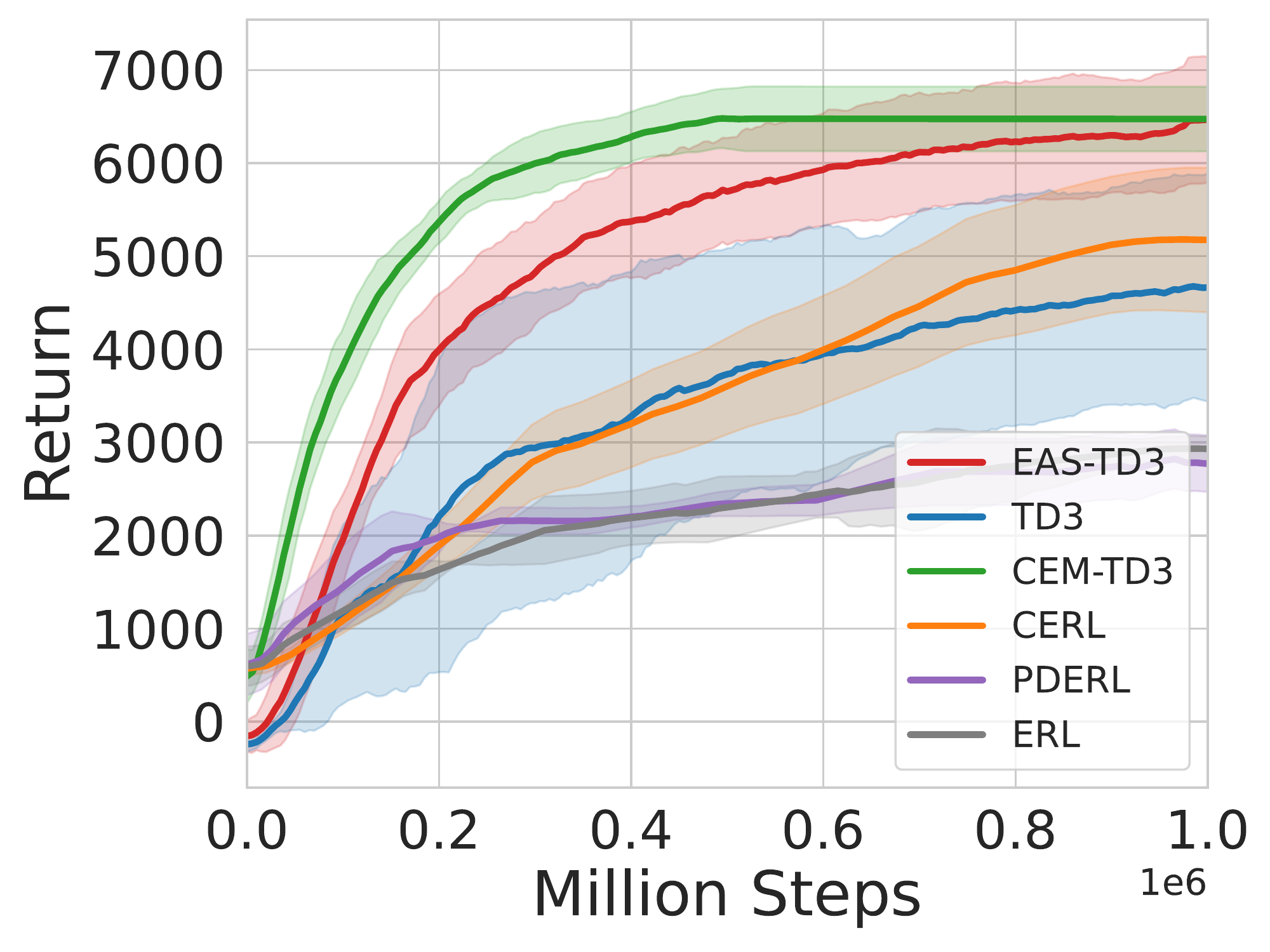}
         \caption{DelayedHalfCheetah-v3}
         \label{fig:4f}
     \end{subfigure}%
     \hfill
     \begin{subfigure}[ht]{0.25\textwidth}
         \centering
         \includegraphics[width=\textwidth]{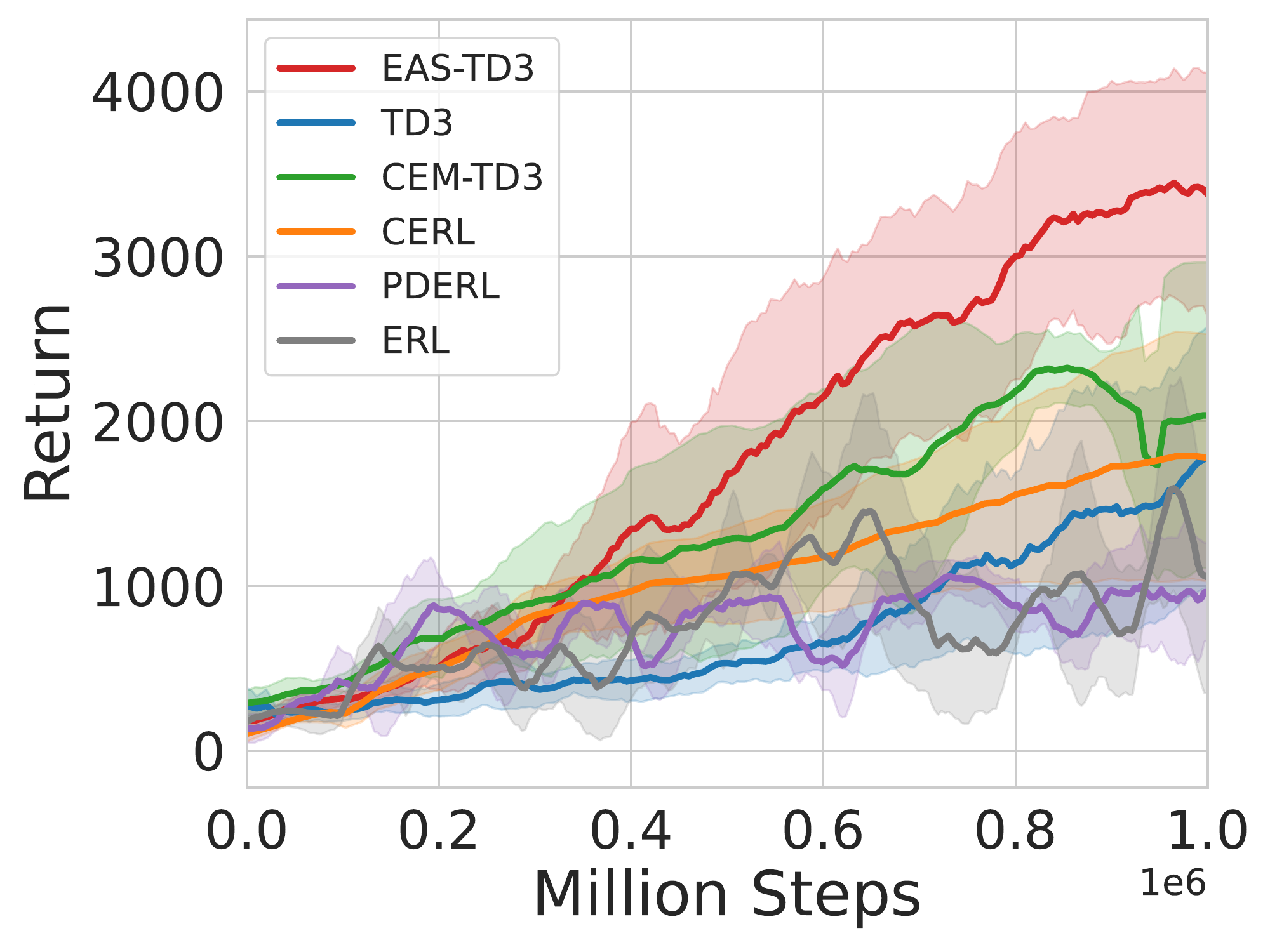}
         \caption{DelayedWalker2d-v3}
         \label{fig:4g}
     \end{subfigure}%
     \hfill
     \begin{subfigure}[ht]{0.25\textwidth}
         \centering
         \includegraphics[width=\textwidth]{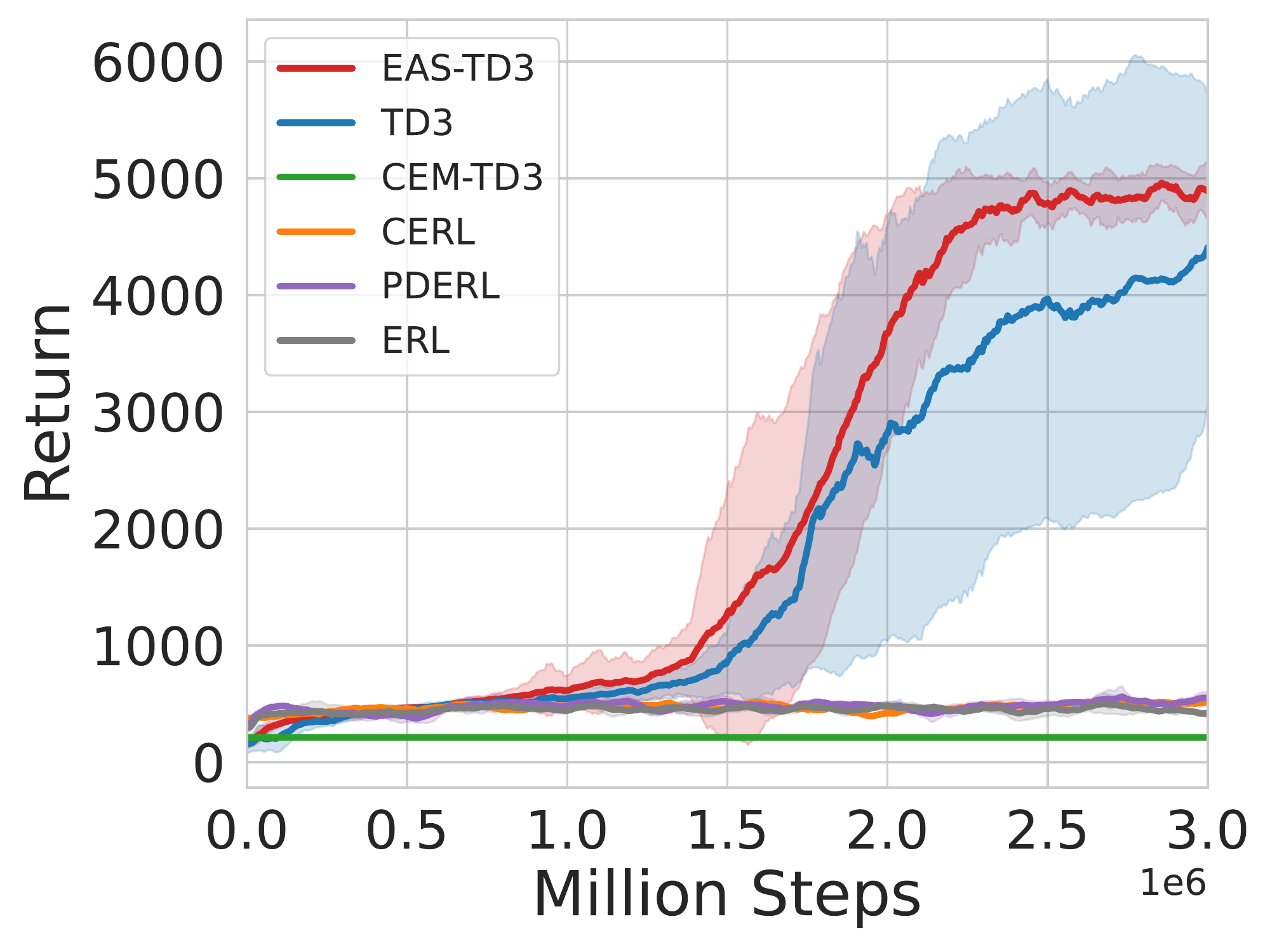}
         \caption{DelayedHumanoid-v3}
         \label{fig:4h}
     \end{subfigure}%
    \caption{The learning curves in MuJoCo environments (up) and DelayedMuJoCo environments (down). The shaded area represents mean ± standard deviation over the
10 runs.}
    \label{fig:4}
\end{figure*}
\subsection{Experiments on MuJoCo and Delayed MuJoCo Environments}

Figure~\ref{fig:4} demonstrates the reward curves in MuJoCo and Delayed MuJoCo environments.
In general, EAS-TD3 performs consistently well across all MuJoCo environments and most Delayed MuJoCo environments except for DelayedHalfCheetah-v3, indicating that EAS plays a significant role in the process of policy learning.

\textbf{Compared to ERL methods}.
EAS-TD3 performs a large improvement on environments with high-dimensional state and action space like Humanoid, Ant, Walker2d.
Why are ERL methods not performing as well as EAS-TD3 on these tasks?
Note that the parameters of the policy network are associated with the state and action dimensions of the environment.
For environments with higher dimensions, their policy networks also have more parameters, and more timesteps are needed for EA to search for good policies.
For example, a policy network in Humanoid-v3 task (state space 376 dims \& action space 17 dims) consisting of a hidden layer with 400 and 300 nodes will have more than 270,000 parameters, which takes more timesteps to learn a decent policy for gradient-free evolutionary methods.
Therefore, in high-dimensional tasks like Humanoid and Ant, the contribution of the evolutionary part in ERL methods will be weakened, resulting in the reduction of learning efficiency or even complete failure, which is consistent with the performance shown in the toy example.
However, EAS-TD3 shifts the target of evolution from high-dimensional parameter space to low-dimensional action space, so as to avoid the disaster of parameter dimension growth.
The evolutionary part in EAS-TD3 focuses on evolving better actions and promoting policy learning through evolutionary action gradients.
It is the main reason why EAS-TD3 performs better.
Besides, CEM-TD3 performs better on HalfCheetah and Delayed HalfCheetah, probably because CEM can search decent policies on these simple tasks.
Moreover, we remark that the evolutionary part may even hinder the early learning of RL part in ERL methods.
It is reflected in Ant and Walker2d tasks, where CEM-TD3, the best performer among the ERL methods, has a slightly slower learning efficiency than TD3.
The reason may be attributed to the lackluster performance of the evolution population, which brings a huge amount of worthless experiences to the replay buffer, thus hindering the learning process of RL part.
EAS-TD3 has no such concern, since the evolution part does not directly train the policy network.

\textbf{Compared to TD3}.
EAS-TD3 outperforms TD3  more or less in all environments.
Compared with TD3, the increment of EAS-TD3 only lies in the influence of the evolutionary action gradient, which indicates the performance enhancement does come from the introduction of EAS.
The evolutionary action can promote policy learning within different tasks.
In HalfCheetah, EAS-TD3 achieves a minor improvement in experiments with multiple random seeds.
Since TD3 has already shown promising results in this environment, the evolutionary action can slightly enhance the sampling efficiency and performance.
At the same time, EAS-TD3 performs well in high-dimensional difficult tasks such as Ant, Walker2d, and Humanoid, as the gradient of evolutionary action rapidly drives the policy's action space toward regions with higher expected reward and greatly improves learning speed and final performance.
We will discuss more how EAS works in the following section.
\begin{figure*}[htb]
    \centering
    \includegraphics[width=0.98\linewidth]{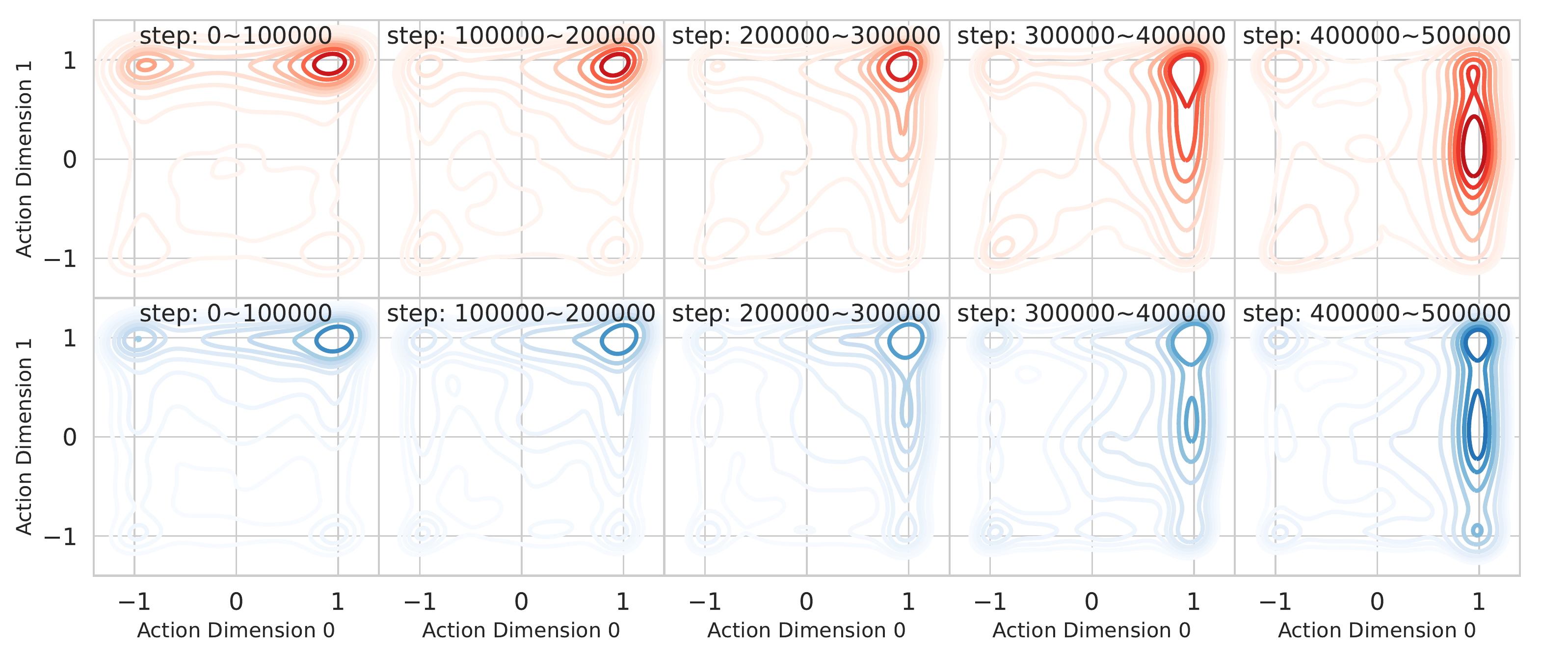}
    \caption{
    The distribution of the first two action dimensions during the training process.
    The red contour represents the current policy’s actions and the blue contour represents the corresponding evolutionary actions.
    The performance of the policy will gradually improve over timesteps.
    The action distribution plot of each column indicates a higher expected reward than the previous column.
    Evolutionary actions predict the action space with a higher expected reward in advance and guide the current policy to move towards there.}
    \label{fig:6}
\end{figure*}

\subsection{Ablation Studies}
\label{sec:4.3}
\textbf{The size of archive $\mathcal{A}$}.
The capacity of archive $\mathcal{A}$ is generally not large, in that the policy will continue to learn while evolutionary action selection is based on the policy at that time.
Therefore, we should control the size of $\mathcal{A}$ to keep evolutionary actions fresh.
Otherwise, the outdated evolutionary actions will not play a role in guiding policy learning.
We select different archive sizes (10,000, 50,000, 100,000, 500,000) and perform an ablation study on the size of $\mathcal{A}$.
As shown in Figure~\ref{fig:ablation_ab}, the archive size affects the performance.
$\mathcal{A}$ stores state-action pairs $(s,a^e)$ corresponding to evolutionary actions.
As discussed earlier, since the evolutionary actions need to keep fresh, the archive size shouldn’t be too large.
Otherwise, the archive $\mathcal{A}$ will store outdated actions, which may weaken the role of evolutionary action.
In contrast, it shouldn’t be too small, or the diversity of actions in $\mathcal{A}$ may be poor. This will lead to the frequent sampling of limited evolutionary actions and hinder the exploration ability of policy. From Figure~\ref{fig:ablation_ab}, 100,000 seems to be a reasonable size for these tasks.

\textbf{The effect of $Q$ filter}.
$Q$ filter is employed to drop out evolutionary action gradient when the action chosen by the current policy is better than the evolutionary action in the archive as discussed in Section~\ref{sec:3.4}.
Since RL policy is constantly learning and improving, evolutionary actions in the archive may be inferior to actions chosen by the current policy.
Figure~\ref{fig:ablation_cd} show the influence of $Q$ filter.
Without $Q$ filter, the learning process of HalfCheetah will be affected by the obsolescence of evolutionary actions.
We also found that in a high-dimensional environment like Ant, even without $Q$ filter, it can also achieve good performance. 
It's may be due to the reason that the appropriate size of the archive plays a role.
The evolutionary actions in the archive are kept fresh and well used to guide strategy learning.
\begin{figure}[htb]
     \centering
     \begin{minipage}[t]{0.48\textwidth}
     \begin{subfigure}[t]{0.50\columnwidth}
         \centering
         \includegraphics[width=\columnwidth]{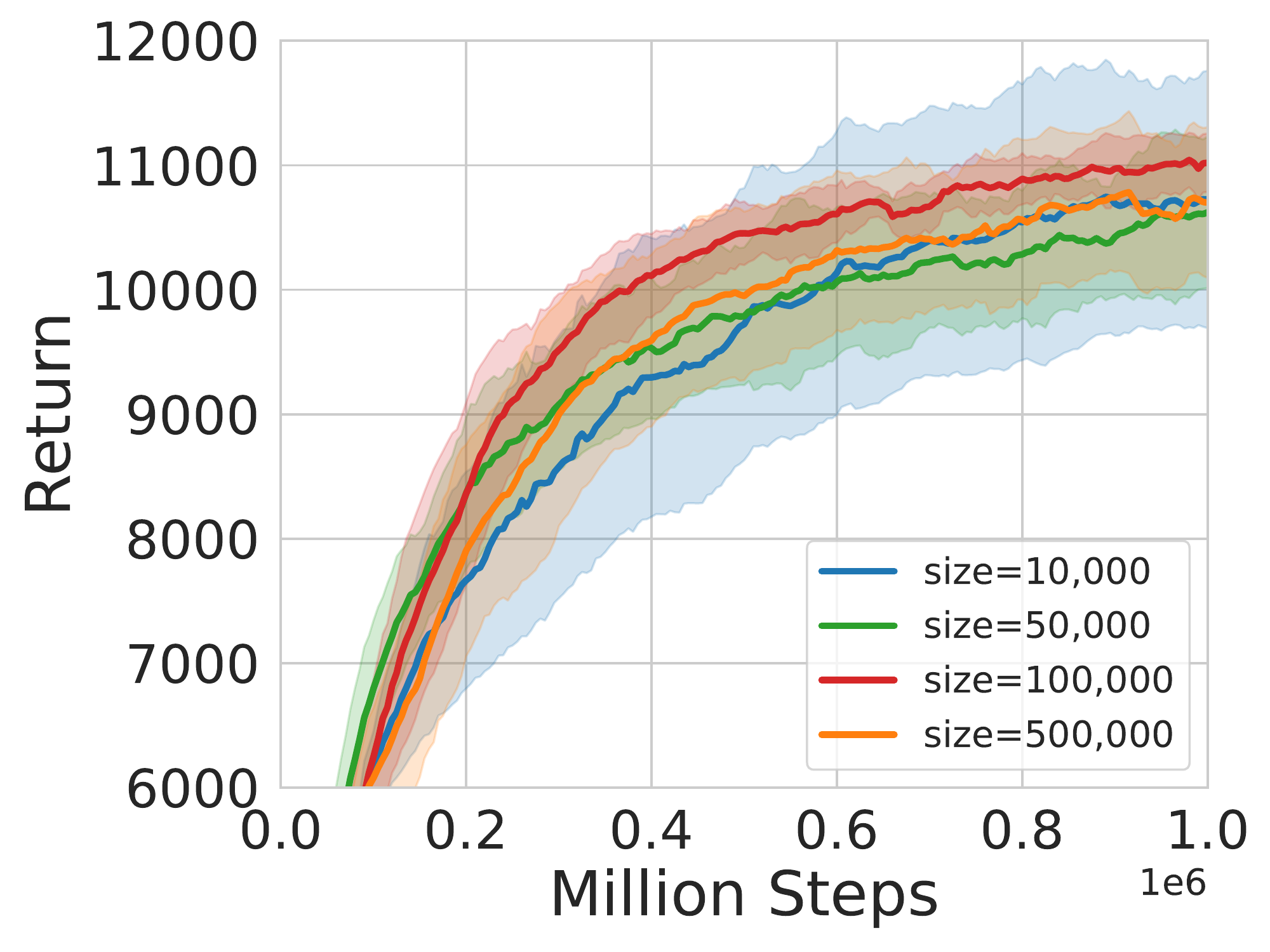}
         \caption{HalfCheetah-v3}
         \label{fig:5a}
     \end{subfigure}%
     \begin{subfigure}[t]{0.50\columnwidth}
         \centering
         \includegraphics[width=\columnwidth]{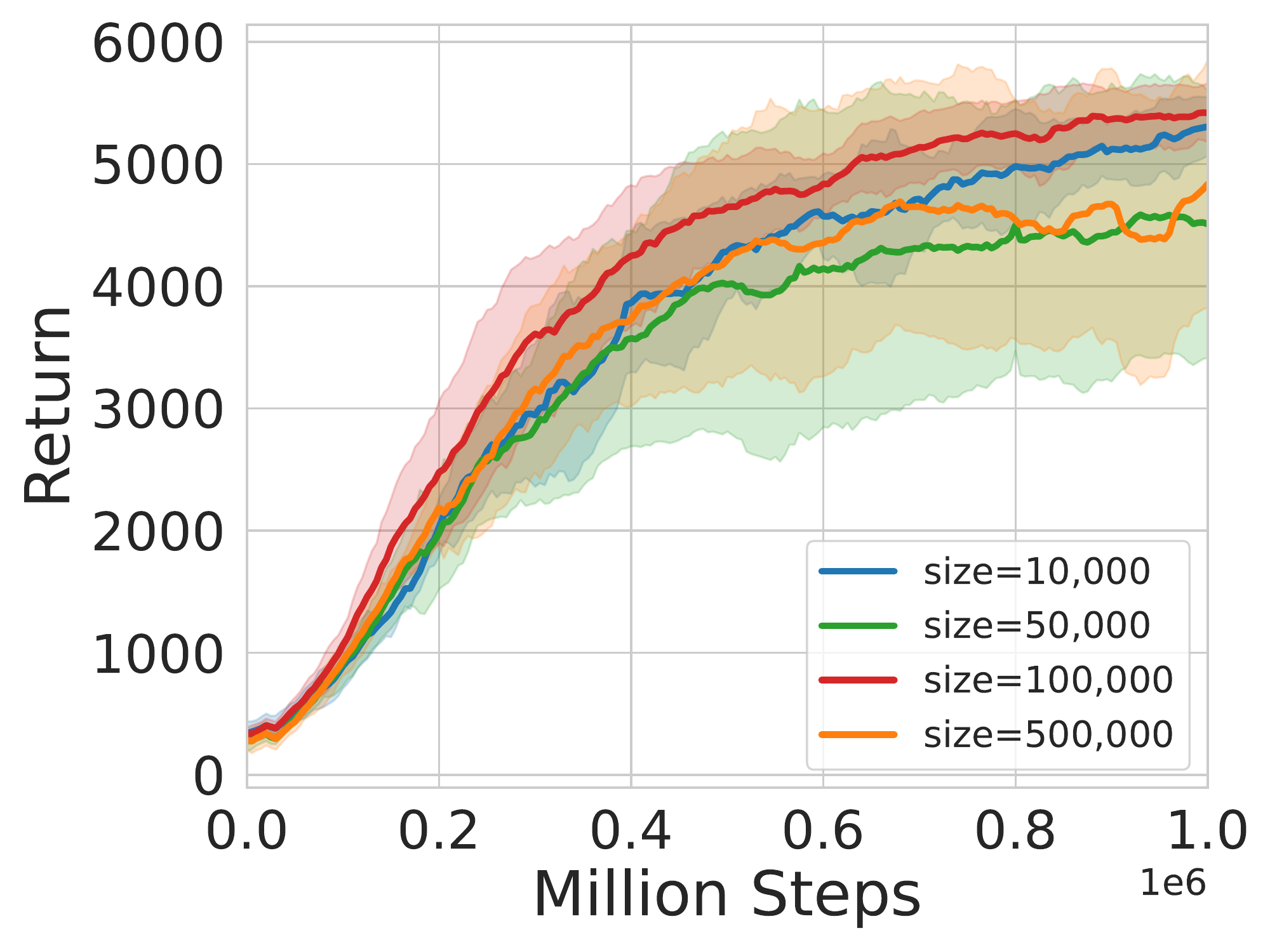}
         \caption{Ant-v3}
         \label{fig:5b}
     \end{subfigure}%
\caption{Ablation study (mean $\pm$ standard deviation) on the size of archive $\mathcal{A}$.
100,000 seems to be a reasonable size.}
\label{fig:ablation_ab}
\end{minipage}
\hfill
\begin{minipage}[t]{0.48\textwidth}
\begin{subfigure}[t]{0.50\columnwidth}
    \centering
    \includegraphics[width=\columnwidth]{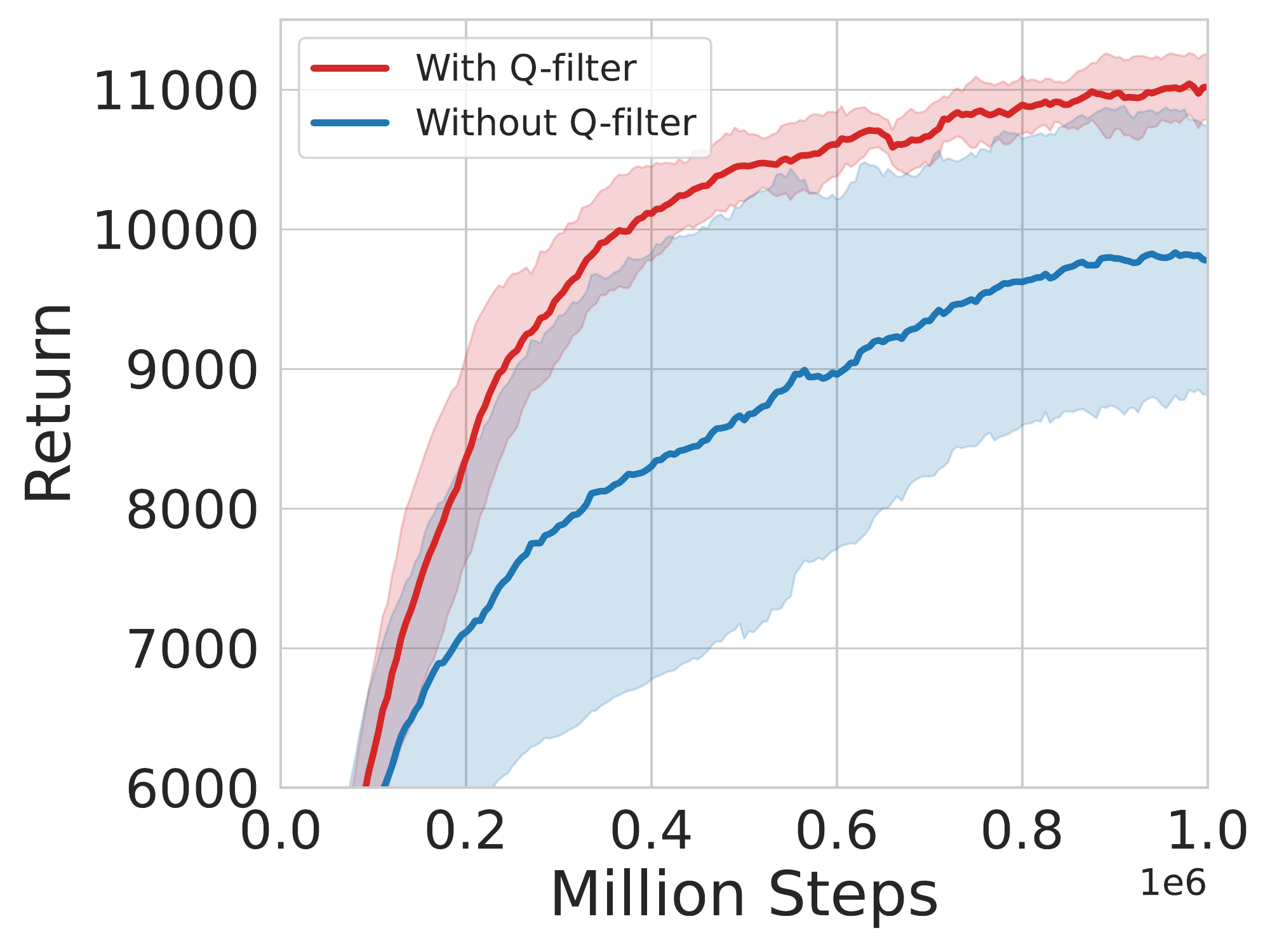}
    \caption{HalfCheetah-v3}
    \label{fig:5c}
\end{subfigure}%
\begin{subfigure}[t]{0.50\columnwidth}
    \centering
    \includegraphics[width=\columnwidth]{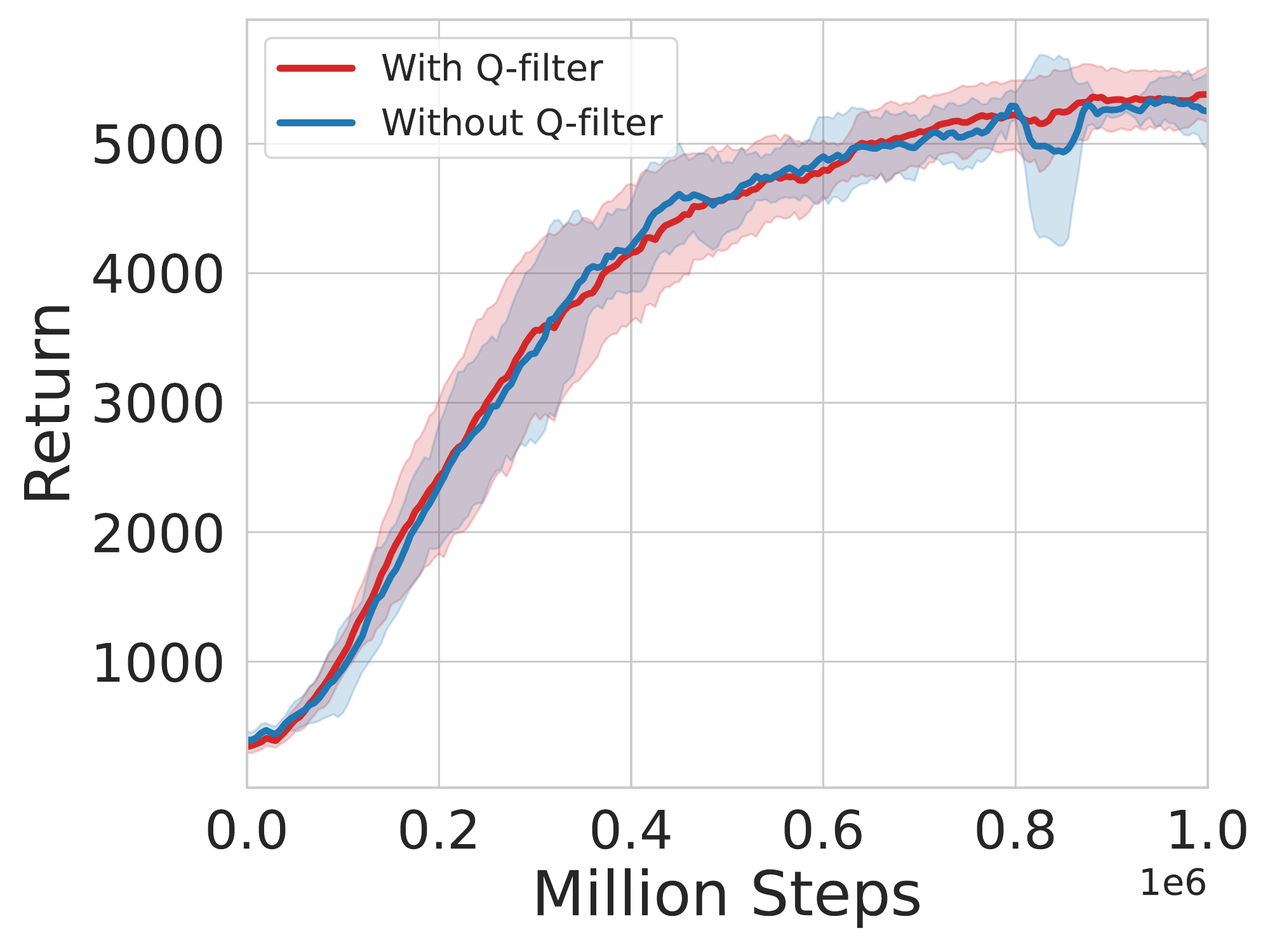}
    \caption{Ant-v3}
    \label{fig:5d}
\end{subfigure}%
\caption{Ablation study (mean $\pm$ standard deviation) on the effect of $Q$ filter.}
\vskip -0.1in
\label{fig:ablation_cd}
\end{minipage}
\label{fig:5}
\end{figure}

\subsection{Evolutionary Action Evaluation}
\label{subsec:evo_act_eval}
In this section, we will investigate the mechanism of EAS and mainly answer a question: how do evolutionary actions promote policy learning?

Figure~\ref{fig:6} shows the first two-dimensional distributions of actions chosen by the policy and the corresponding evolutionary actions during the training process of Walker2d-v3.
Each plot demonstrates the distribution of all actions in every 100,000 timesteps.
The performance of the policy will gradually improve over timesteps.
Therefore, the action distribution plot of each column in Figure~\ref{fig:6} indicates a higher expected reward than the previous column.
From Figure~\ref{fig:6}, we can see that evolutionary actions predict the action space with higher expected reward in advance and guide the current policy to move towards there, which may explain why EAS works.
For example, within the 0 to 100,000 timesteps (corresponding to the two plots in the first column), the distribution of the evolutionary action exhibits a trend from the upper right downward.
The downward trend subsequently becomes more pronounced.
Correspondingly, the policy action distribution shows the same trend under the guidance of both the policy gradient and evolutionary action gradient.
This indicates that the evolutionary action gradient promotes strategy learning in the same direction as the policy gradient, which significantly accelerates the progress of policy learning.
From Figure~\ref{fig:6}, we also discover that the space of evolutionary actions doesn't deviate too much from the current policy's action space, which is served as a gentle mentor and guides the current policy step by step.
See Appendix~\ref{Appendix:More Evolutionary Action Evaluation} for a complete plot and more analysis.
In conclusion, with the aid of evolutionary action gradient, we can make good use of evolutionary actions and finally play a role in the learning process.
\begin{wrapfigure}{r}{0.40\textwidth}
    \centering
    \includegraphics[width=0.97\linewidth]{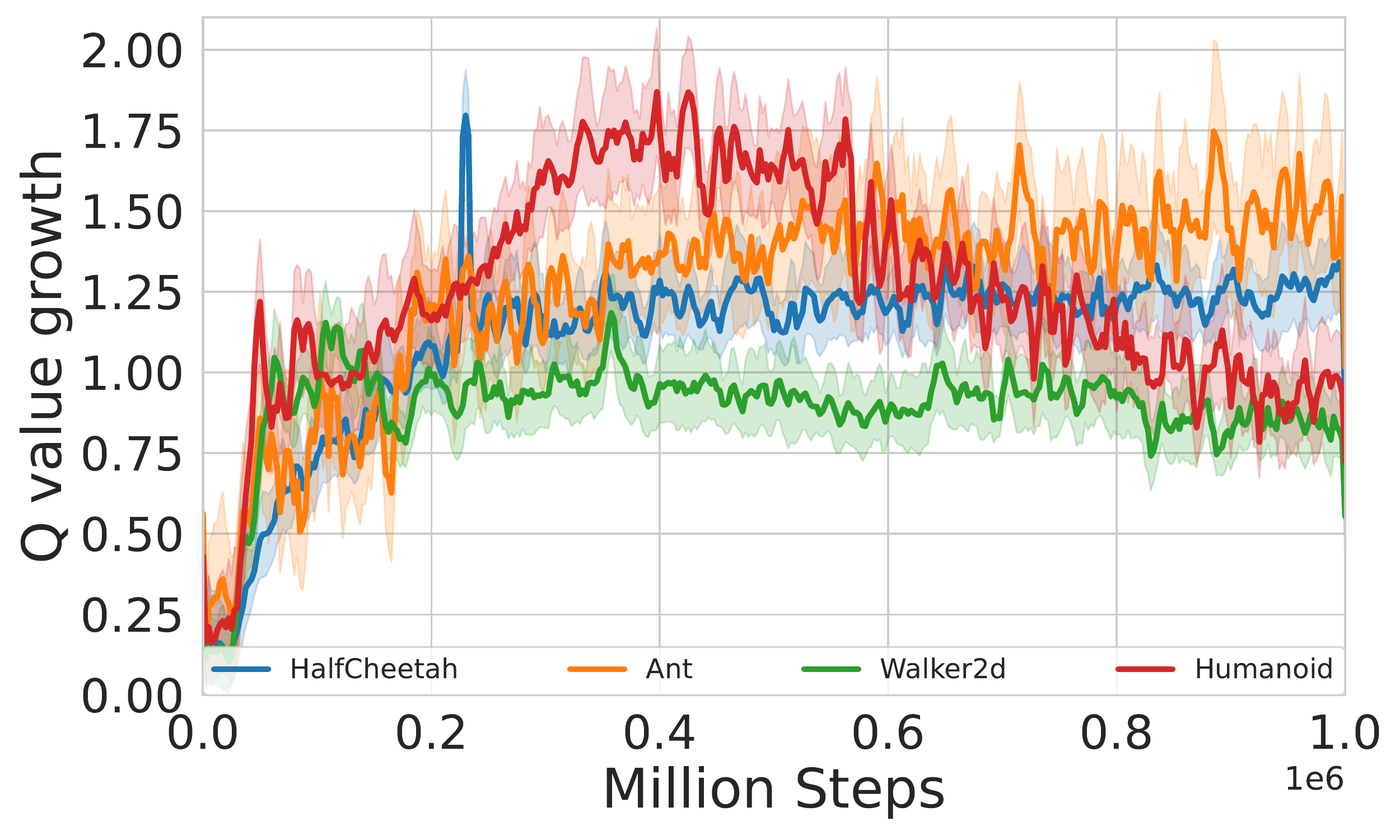}
    \caption{Increase of $Q$ value by EAS. In the training process, the growth of $Q$ value by EAS increases gradually and tends to be stable. }
    \label{fig:7}
    \vspace{-12pt}
\end{wrapfigure}

In addition, we recorded the $Q$ value of each action and the corresponding evolutionary action throughout the training process in MuJoCo environments.
Figure~\ref{fig:7} demonstrates the magnitude of the $Q$ value growth before and after the action evolution.
In the training process, the growth of $Q$ value increases gradually and tends to be stable, which indicates EAS can provide long-term and stable guidance for the whole learning process.
Besides, in the period when the learning curve rises rapidly (200,000 to 600,000 timesteps), the growth of $Q$ value is also large.
Especially in the Humanoid environment, the learning curve rises fastest during 300,000 to 400,000 timesteps while the $Q$ value growth is also largest, which indicates that the performance improvement does come from EAS.
\section{Conclusion and Further Work}
\label{sec:conclusion}

This paper proposes Evolutionary Action Selection (EAS) which apply evolution for the action chosen by the policy.
We integrate EAS into TD3 as EAS-TD3, which is a novel hybrid method of EA and RL.
Compared with other hybrid methods, our approach transforms the target of evolution from the high-dimensional parameter space to the low-dimensional action space.
With the promotion of evolutionary actions, we train policies that substantially outperform those trained directly by TD3 and other compared methods.
Extensive experiments and empirical analyses demonstrate that evolving action is a favorable alternative to evolving the parameters of the policy network.
We believe that our work can trigger follow-up studies to address several interesting open questions.
In future work, we can integrate the idea of Quality-Diversity (QD)~\cite{QDNature, QD}, which not only evolve high-quality actions but also evolve diverse actions to deal with the environment with deceptive rewards.

\bibliography{neurips_2022.bib}
\bibliographystyle{plainnat}

\newpage
\appendix
\section{Background}\label{Appendix:Background}
\subsection{Twin Delayed DDPG (TD3)}\label{sec:2.1}
The general off-policy RL method is Deep Deterministic Policy Gradient (DDPG)~\cite{DDPG}, which is developed for handling the case of high-dimensional continuous action space. ~\citet{TD3} extended DDPG to Twin Delayed DDPG (TD3). With a significant improvement upon DDPG, TD3 is a state-of-the-art off-policy algorithm for RL in continuous action spaces. TD3 uses an actor-critic architecture and maintains a deterministic policy (actor) $\pi:S\rightarrow A$, and two independent action-value function approximations (critics) $Q:S\times A\rightarrow Q(s,a)$. Generated by the actor performing the action $a$, the experience $(s_t,a_t,r_t,s_{t+1})$ is saved into a replay buffer $R$ thereafter. Actor and critic networks are updated by randomly sampling mini-batch from $R$. Critic is trained by minimizing the MSE loss function:
\begin{equation}
\begin{split}
L=&N^{-1}\sum(y-Q_{\theta_i}(s,a))^2 \\
where~~~&y=r+\gamma min_{i=1,2}Q_{\theta_i}(s',\widetilde{a})
\end{split}
\label{eq:td3_q}
\end{equation}
$\tilde{a}$ is the noisy action computed by adding Gaussian noise. TD3 adds noise to the target action. Actor is trained by deterministic policy gradient~\cite{DPG}:
\begin{equation}
\nabla_\theta J(\theta)=N^{-1}\sum\nabla_a Q_{\theta_1}(s,a)|_{a=\pi_\theta(s)}\nabla_\theta\pi_\theta(s)
\label{eq:td3_pi}
\end{equation}
\subsection{Particle Swarm Optimization}
PSO~\cite{PSO} is a meta-heuristic global optimization paradigm whose basic concept originated from the study of bird flocks foraging behavior.
It has turned out to be successful in dealing with diverse problems such as image segmentation~\cite{PSO_Image, PSO_Image2}, hyperparameter selection~\cite{PSO_Hyper, PSO_Hyper2} and natural language processing~\cite{PSO_NLP, PSO_NLP2}.
\citet{PSO_GA} has claimed the proof of equal effectiveness but superior efficiency for PSO over the Genetic Algorithm (GA)~\cite{GA}.
PSO uses a population to explore the optimal solution of the $D$-dimensional search space.
This population is called a swarm, which contains multiple individuals called particles.
Each particle has only two specified attributes: position and velocity.
One represents the current position in the search space, while the other represents the direction of movement.
\section{Proof of Proposition 1}
\setcounter{theorem}{0}
\label{Appendix:Proof}
\begin{proposition}
\label{proposition_proof:1}

EAS policy $\mu_e$ optimizes the action $a$ (generated by $\mu_\theta$) in the direction of increasing $Q_{\mu_\theta}$ (estimated by the critic) to obtain $a^e$.
Hence, for arbitrary state $s$, there exists inequality $\xi$: $\mathbb{E}_{a^e \sim\mu_e(s)}[Q_{\mu_\theta}  (s,a^e)] \ge \mathbb{E}_{a\sim\mu_\theta(s)}[Q_{\mu_\theta} (s,a)]$. Then, we hold that:

\begin{equation*}
\mathbb{E}_{a^e\sim\mu_e(s)}[Q_{\mu_e}(s,a^e)]\ge\mathbb{E}_{a\sim\mu_\theta(s)}[Q_{\mu_\theta}(s,a)]
\label{eq:15}
\end{equation*}
\end{proposition}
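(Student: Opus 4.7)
The plan is to treat Proposition~\ref{proposition_proof:1} as an instance of the classical policy improvement theorem, using the hypothesis $\xi$ as the one-step improvement condition and the Bellman recursion to lift it to a statement about the full value function of $\mu_e$. Define $V_{\mu_\theta}(s) := \mathbb{E}_{a\sim\mu_\theta(s)}[Q_{\mu_\theta}(s,a)]$ and $V_{\mu_e}(s) := \mathbb{E}_{a^e\sim\mu_e(s)}[Q_{\mu_e}(s,a^e)]$, so the target inequality becomes $V_{\mu_e}(s) \ge V_{\mu_\theta}(s)$ for every $s$.

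First, I would rewrite $\xi$ as $V_{\mu_\theta}(s) \le \mathbb{E}_{a^e\sim\mu_e(s)}[Q_{\mu_\theta}(s,a^e)]$ and expand the right-hand side using the Bellman equation $Q_{\mu_\theta}(s,a) = r(s,a) + \gamma \mathbb{E}_{s'\sim p(\cdot\mid s,a)}[V_{\mu_\theta}(s')]$. Because $\xi$ is asserted for every state, I can re-apply it to each $V_{\mu_\theta}(s')$ that appears inside the expectation, and iterate. After $n$ unrollings I obtain
\begin{equation*}
V_{\mu_\theta}(s) \;\le\; \mathbb{E}_{\mu_e}\!\left[\sum_{k=0}^{n-1}\gamma^k\, r(s_k,a^e_k)\,\middle|\, s_0=s\right] \;+\; \gamma^n\,\mathbb{E}_{\mu_e}\!\left[V_{\mu_\theta}(s_n)\,\middle|\, s_0=s\right],
\end{equation*}
where the expectations are taken over trajectories generated by acting according to $\mu_e$ from state $s$.

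Second, I would pass to the limit $n\to\infty$. Under the standard discounted MDP assumptions already in force for TD3 (bounded reward, $\gamma\in(0,1)$), $V_{\mu_\theta}$ is bounded so the residual term $\gamma^n\,\mathbb{E}_{\mu_e}[V_{\mu_\theta}(s_n)]$ vanishes, while the discounted return along the $\mu_e$-trajectory converges by definition to $V_{\mu_e}(s)$. This yields $V_{\mu_\theta}(s)\le V_{\mu_e}(s)$, which is precisely the claim once both sides are identified with the two expectations in the proposition.

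The main obstacle is almost entirely bookkeeping: I must verify that $\xi$ genuinely holds at \emph{every} state visited along the $\mu_e$-trajectory (which is given by the phrasing ``for arbitrary state $s$''), and that the tail term really vanishes under the discount assumption. No novel technique is required beyond the standard Bellman-unrolling argument for policy improvement; the content-bearing step, namely the one-step improvement $\xi$, is supplied by the construction of EAS, which optimizes $Q_{\mu_\theta}$ via PSO and therefore guarantees $Q_{\mu_\theta}(s,a^e)\ge Q_{\mu_\theta}(s,a)$ in expectation by Eq.~\ref{eq:7}.
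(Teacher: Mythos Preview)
Your proposal is correct and follows essentially the same route as the paper: both arguments are the classical policy-improvement proof, starting from $V_{\mu_\theta}(s)\le \mathbb{E}_{a^e\sim\mu_e(s)}[Q_{\mu_\theta}(s,a^e)]$, expanding $Q_{\mu_\theta}$ via the Bellman equation, re-applying $\xi$ at the next state, and iterating to obtain the discounted return under $\mu_e$. Your treatment is in fact slightly more careful than the paper's, since you explicitly identify the residual term $\gamma^n\,\mathbb{E}_{\mu_e}[V_{\mu_\theta}(s_n)]$ and justify why it vanishes, whereas the paper simply unrolls to ``$\gamma^\infty$'' without comment.
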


\begin{proof}
For arbitrary $s$, 
\vskip -0.2in
\begin{equation*}
\begin{aligned}
   ~V_{\mu_\theta}(s)
   &=\mathbb{E}_{a \sim \mu_\theta(s)}\left[Q_{\mu_\theta}(s,a)\right]
   \le\mathbb{E}_{a^e \sim \mu_e(s)}\left[Q_{\mu_\theta}(s,a^e)\right] \quad \\
   &=\mathbb{E}_{a^e \sim \mu_e(s)}\left[r_{\mu_e}+\gamma \mathbb{E}_{s' \sim p, a'\sim \mu_\theta(s')}\left[Q_{\mu_\theta}(s',a')\right]\right] \\
   &\le\mathbb{E}_{a^e \sim \mu_e(s)}\left[r_{\mu_e}+\gamma \mathbb{E}_{s' \sim p, {a^{e}}'\sim \mu_e(s')}\left[Q_{\mu_\theta}(s',{a^{e}}')\right]\right] \\
   &=\mathbb{E}_{a^e\sim \mu_e(s)}\left[r_{\mu_e}+\gamma r'_{\mu_e}+\gamma^2 \mathbb{E}_{s'' \sim p, a'' \sim \mu_\theta(s'')}\left[Q_{\mu_\theta}(s'',a'')\right]\right] \\
   &\le \mathbb{E}_{a^e\sim \mu_e(s)}\left[r_{\mu_e}+\gamma r'_{\mu_e}+\gamma^2                    \mathbb{E}_{s'' \sim p, {a^e}''\sim \mu_e(s'')}\left[Q_{\mu_\theta}(s'',{a^e}'')\right]\right] \\
   &\cdots\\
   &\le \mathbb{E}_{a^e\sim \mu_e(s)}\left[r_{\mu_e}+\gamma r'_{\mu_e}+\gamma^2 r^{''}_{\mu_e}+...+\gamma^\infty  r_{\mu_e}^\infty\right] \\
   &=\mathbb{E}_{a^e\sim \mu_e(s)}\left[\sum\nolimits_{i=0}^\infty \gamma^{i}r^i_{\mu_e}\right]~ (r^i_{\mu_e}=\mathbb{E}_{s^i \sim p, a^{e i} \sim \mu_e} [r(s^i,a^{e i})])\\
   &\cong \mathbb{E}_{a^e \sim \mu_e(s)}\left[Q_{\mu_e}(s,a^e)\right]  
   =V_{\mu_e}(s)
\end{aligned}    
\label{eq:16}
\end{equation*}
\vskip -0.25in
\end{proof}
\clearpage
\section{Implementation Details}\label{Appendix:Implementation details}

\subsection{Fitness Evaluator}\label{sec:3.1}
PSO requires a fitness evaluator to evaluate the performance of actions in each generation.
We hope the action after evolution can obtain a greater expected reward.
So the average long-term reward of performing evolutionary actions in the environment appears to be an insightful fitness evaluator.
However, this will significantly slow down the evaluation efficiency, which is not what we expect.
Therefore, it is decisive to employ a fitness evaluator that can efficiently evaluate the value of each action.

In RL, we use the state-action value function $Q(s, a)$ to represent the long-term reward.
$Q(s,a)$ is denoted by Eq.~\ref{eq:5}, meaning the expected reward the policy $\mu_\theta$ can obtain by performing action $a_t$ in state $s_t$.
Once the $Q$ value corresponding to the action is known, we can directly evaluate the expected reward of the action without actually executing it in the environment.
Thus, the state-action value function seems to be a more sensible fitness evaluator.

Eq.~\ref{eq:6} is a recursive version of Eq.~\ref{eq:5}, also known as the Bellman expectation equation, which resembles dynamic programming and makes it feasible to estimate $Q$.
For specific RL algorithms, a value network is used to estimate the $Q$ function, which is called critic in TD3. Ultimately, we decided to choose the critic network as the fitness evaluator. The action in population will be input into the critic network to obtain the $Q$ value, whose magnitude can represent the performance of action.
\begin{equation}\label{eq:5}
Q_{\mu_\theta}(s_t,a_t)=\mathbb{E}_{\mu_\theta}\left[\left.\sum_{t'=t}^T r(s_{t'}, a_{t'})\right\vert s_t,a_t\right]
\end{equation}
\begin{equation}\label{eq:6}
\begin{split}
=\mathbb{E}_{s_{t+1}\sim p}\left[r(s_t,a_t)
+\mathbb{E}_{a_{t+1}\sim\mu_\theta}[Q_{\mu_\theta}(s_{t+1},a_{t+1})]\right] 
\end{split}
\end{equation}
\subsection{EAS-TD3 Pseudocode}\label{Appendix:EAS-TD3 Pseudocode}
\begin{algorithm}[htb]
\caption{EAS-TD3}
\label{alg:EAS-TD3}
\begin{algorithmic}[1] 
\State Initialize policy networks $\mu_\theta$, critic networks, $Q_{\mu_\theta1}$,$Q_{\mu_\theta2}$ with random parameters.
\State Initialize Archive $\mathcal{A}$, Replay Buffer $\mathcal{R}$
\For{$t=1~\textbf{to}~total\_steps$}
\State Policy $\mu_\theta$ chooses an action $a_t$ based on the current state $s_t$
\State Execute action $a_t$, obtain reward $r_t$, next state $s_{t+1}$, terminal flag $d$
\State Store sample tuple $(s_t, a_t, r_t, s_{t+1}, d)$ in Buffer $\mathcal{R}$.
\State \textcolor{red}{\emph{Utilize EAS to evolve action $a_t$, and obtain evolutionary action $a^e$}}
\State \textcolor{red}{\emph{Store state-action pair $(s_t, a^e)$ in Archive $\mathcal{A}$}}
\State Sample a batch of samples ${(s, a, r, s', d)}$ from Buffer $\mathcal{R}$
\State Policy $\mu_\theta$ chooses next action $a'$ based on next state $s'$
\State $Q(s',a')\gets min_{j=1,2}(Q_{\mu_\theta j}(s', a'))$
\State $y \gets r + \gamma(1-d)Q(s',a')$
\State Update the critic $Q_{\mu_\theta1}$, $Q_{\mu_\theta2}$ according to Bellman loss~Eq.\ref{eq:td3_q}
\If{$t~\%~2 == 0$}
\State Update the policy $\mu_\theta$ according to policy gradient ~Eq.\ref{eq:td3_pi}
\State \textcolor{red}{\emph{Sample a batch of state-action pairs ${(s, a^e)}$ from Archive $\mathcal{A}$}}
\State \textcolor{red}{\emph{Update the policy $\mu_\theta$ according to evolutionary action gradient ~Eq.\ref{eq:11}}}
\EndIf
\EndFor
\end{algorithmic}
\end{algorithm}
\clearpage

\section{Hyperparameters}
\label{Appendix:Hyperparameters}
%
Hyperparameters of TD3 and EAS are detailed in Table~\ref{tab:2}.
In addition, for Delayed Halfcheetah, the learning rate is still 1e-3.
For Delayed Walker2d, the learning rate is 1e-3, the archive size is 10,000.
For Delayed Humanoid, the learning rate is 1e-4.
For Delayed Ant, the learning rate is 1e-4 and the replay buffer size is 200,000.

\newcommand{\tabincell}[2]{\begin{tabular}{@{}#1@{}}#2\end{tabular}}
\begin{table}[htb]
\vskip -0.2in
\centering
\caption{Hyperparameters of TD3 and EAS in MuJoCo environments}
\vskip 0.15in
\resizebox{.78\columnwidth}{!}{
    \begin{tabular}{lcc}
        \toprule
            \textbf{Hyperparameter} & \tabincell{c}{\textbf{All environments except} \\ \textbf{for Humanoid-v3}} & \textbf{Humanoid-v3} \\
        \midrule
        Hyperparameters for TD3 \\
        ~~~~Batch size & 100 & 256 \\
        ~~~~Policy \& Critic network & (400,300) & (256,256) \\
        ~~~~Learning rate & 1e-3 & 3e-4 \\
        ~~~~Optimizer & \multicolumn{2}{c}{Adam} \\
        ~~~~Replay buffer size & \multicolumn{2}{c}{1e6} \\
        ~~~~Start timesteps &\multicolumn{2}{c}{2.5e4} \\
        ~~~~Exploration noise & \multicolumn{2}{c}{$\mathcal{N}(0,0.1)$} \\
        ~~~~Discount factor & \multicolumn{2}{c}{0.99}\\
        ~~~~Target update rate & \multicolumn{2}{c}{5e-3} \\
        ~~~~Policy update freq & \multicolumn{2}{c}{2} \\
        \midrule
        Hyperparameters for EAS \\
        ~~~~Archive size & 
        \multicolumn{2}{c}{1e5} \\
        ~~~~Population size $N$ & \multicolumn{2}{c}{10} \\
        ~~~~Inertia weight $\omega$ & \multicolumn{2}{c}{1.2} \\
        ~~~~Acceleration coefficients $c_1$ \& $c_2$ & \multicolumn{2}{c}{1.5} \\
        ~~~~Random coefficients $r_1$ \& $r_2$ & \multicolumn{2}{c}{$rand (0,1)$} \\
        ~~~~Iteration number & \multicolumn{2}{c}{10} \\
        ~~~~Maximum velocity $v_{max}$ & \multicolumn{2}{c}{0.1} \\
        \bottomrule
    \end{tabular}
}
\vskip -0.10in
\label{tab:2}
\end{table}
\begin{itemize}
    \item \textbf{Archive size = 100,000} \\
    Archive stores state-action pairs $(s,a^e)$ corresponding to evolutionary actions.
    As shown in the ablation study, this parameters should not be too large or too small, which is usually set to one-tenth of the replay buffer size.
    \item \textbf{Population size $N$ = 10} \\
    Population size represents the number of actions in the population.
    A large number allows each iteration to cover a larger search space.
    However, more particles will increase the computational complexity of each iteration.
    10 is a moderate value.
    \item \textbf{Inertia weight $\omega$ = 1.2} \\
    Inertia weight is used in the Eq.~\ref{eq:velocity} for velocity update.
    Velocity determines the direction and magnitude of the action update.
    Inertia weight describes the influence of the previous generation's velocity on the current generation's velocity, which is usually set to between 0.8 and 1.2.
    If $\omega$ is large, the global optimization ability is strong, and the local optimization ability is weak and vice versa.
    \item \textbf{
    Acceleration coefficients $c_1$ = 1.5, $c_2$ = 1.5
    and random coefficients $r_1$ = \bm{$rand(0,1)$}, $r_2$ = \bm{$rand(0,1)$}
    }\\
    According to Eq.~\ref{eq:velocity}, acceleration coefficients $c_1$ and $c_2$, together with random coefficients $r_1$ and $r2$, control the stochastic influence of individual optimal action $p^b$ and global optimal action $g^b$ on the current generation's velocity.
    $c_1$ and $c_2$ are also referred to as trust parameters, where $c_1$ expresses how confident the current action trusts itself while $c_2$ expresses how confident the current action trusts the population~\cite{PSO_Book}.
    According to empirical studies, $c_1$, $c_2$ are generally set between 0.5 and 2.5.
    \item \textbf{Iteration number = 10} \\
    Iteration number represents the number of generations of the action population.
    \item \textbf{Maximum velocity $v_{max}$ = 0.1} \\
    $v_{max}$ represents the maximum velocity of each action in the population, which is generally set to one-tenth of the maximum value of the action.
\end{itemize}

\section{Additional Experiments on Fitness Evaluator}
During the training process, both the critic network and the policy network gradually get better.
Eventually, the critic network can accurately evaluate the $Q$ values of actions.
The policy network can output high-quality actions to obtain higher cumulative rewards.
It is a common bootstrap learning process in RL.

As mentioned in \cref{sec:3.1}, we regard the value network in RL (called critic network in TD3) as a fitness evaluator to evaluate the fitness of each action in the population.
In this section, we intend to investigate the influence of different fitness evaluators on our approach.
Specifically, we devised a new variant of EAS-TD3 called \textit{PretrainedCritic-EAS-TD3}.
We use TD3 to pretrain a critic network~(one million timesteps) as the fitness evaluator of EAS, whose parameters will be frozen during the training process.
Figure~\ref{fig:14} shows the learning curves, which indicates that each fitness evaluator has its own advantages and limitations.
In the early stage of training, the pretrained fitness evaluator provides a relatively accurate assessment of $Q$ values and EAS can generate high-quality evolutionary actions.
However, these high-quality evolutionary actions may lead to overexploitation and lack of exploration.
Although the fitness evaluator of EAS can not provide an accurate evaluation of $Q$ value and may weaken the quality of evolutionary actions in the early stage of training, the resulting evolutionary actions may contribute to the exploration of the policy.
In a word, the pretrained critic network might be an alternative for the fitness evaluator, and how to devise the fitness evaluator will be interesting follow-up work.
\begin{figure}[htbp]
\centering
    \begin{subfigure}[t]{0.24\textwidth}
    \centering
    \includegraphics[width=\columnwidth]{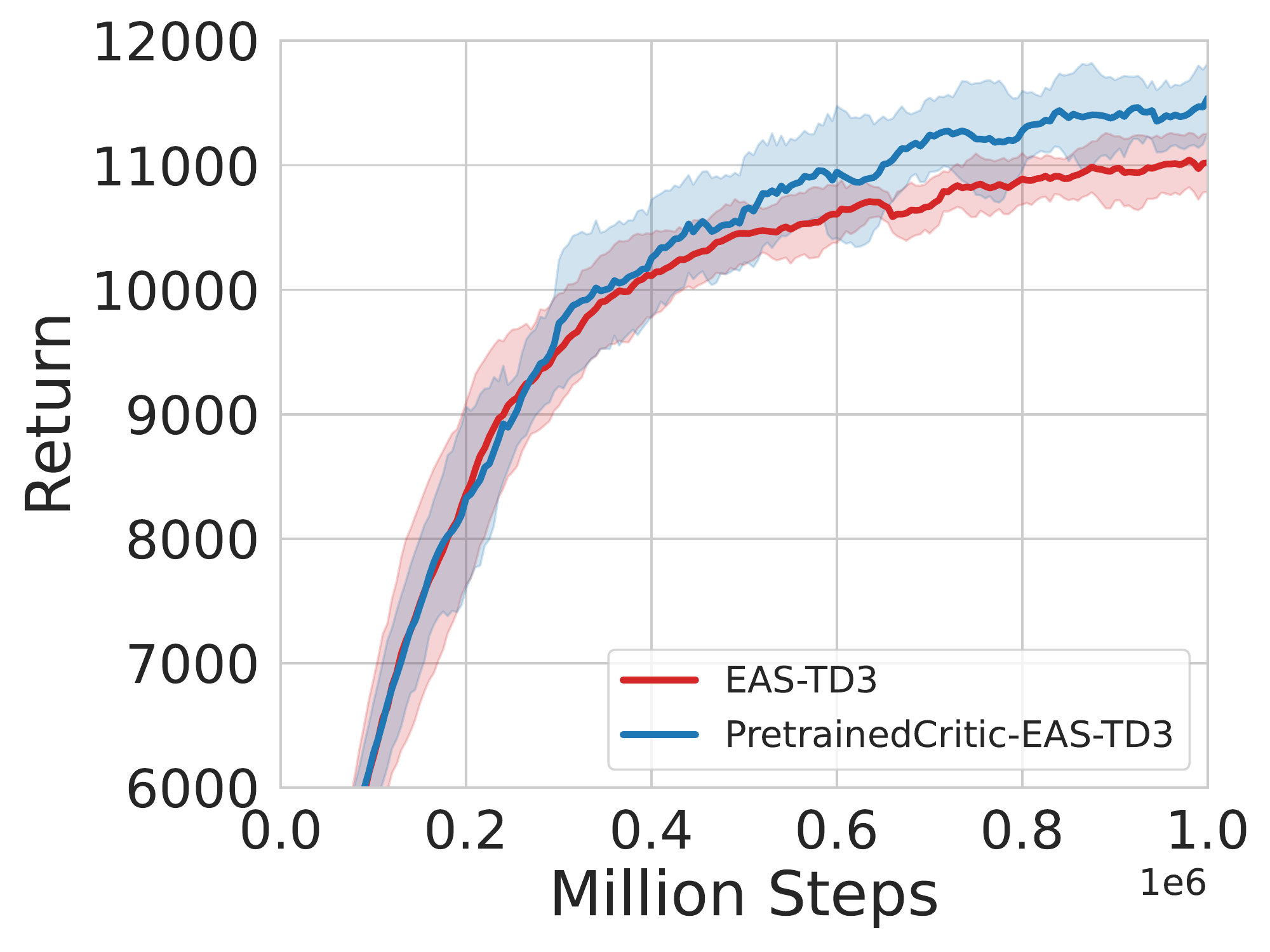}
    \caption{HalfCheetah-v3}
    \label{fig:14a}
    \end{subfigure}%
    \begin{subfigure}[t]{0.24\textwidth}
    \centering
    \includegraphics[width=\columnwidth]{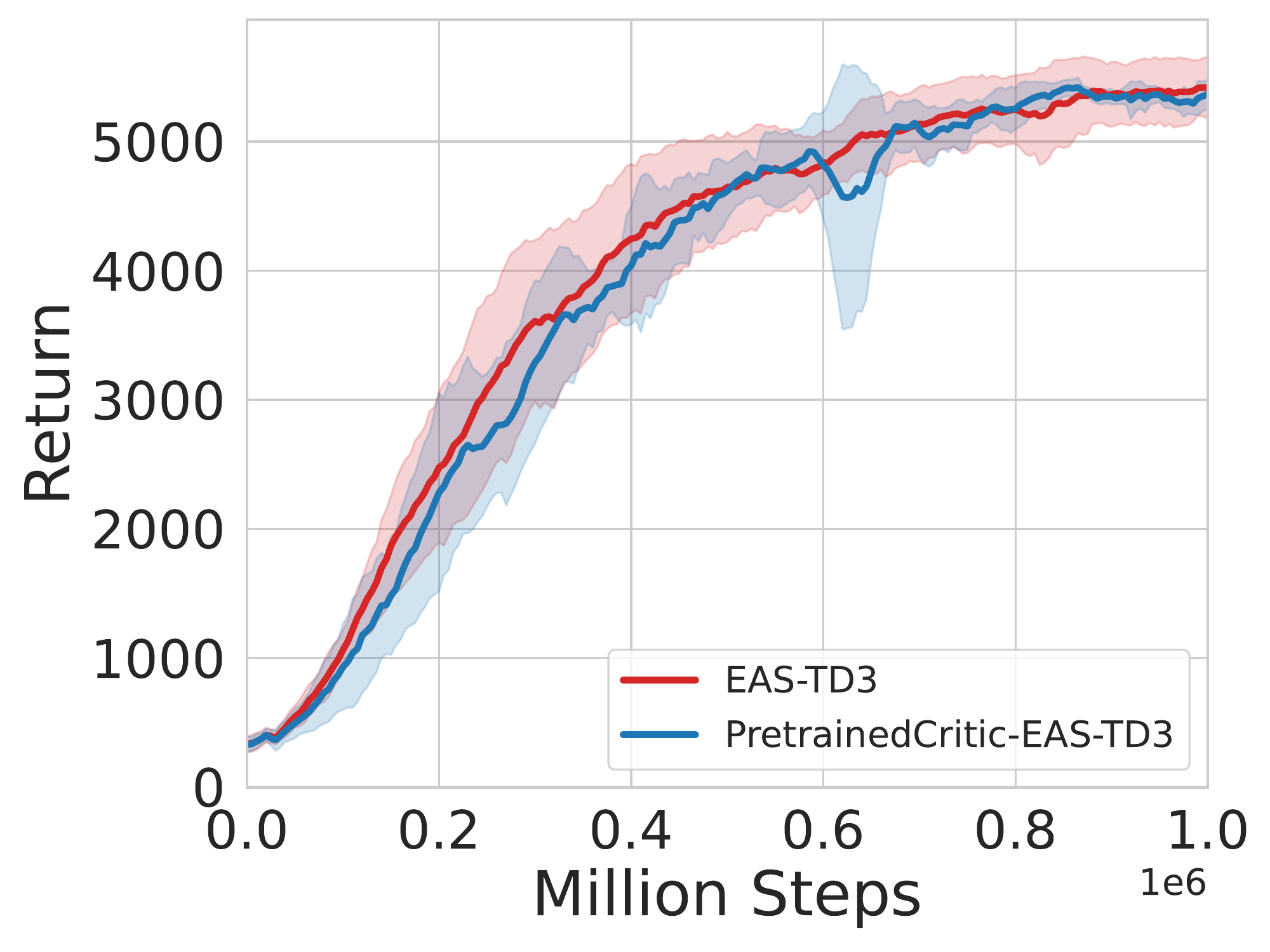}
    \caption{Ant-v3}
    \label{fig:14b}
    \end{subfigure}%
    \begin{subfigure}[t]{0.24\textwidth}
        \centering
        \includegraphics[width=\columnwidth]{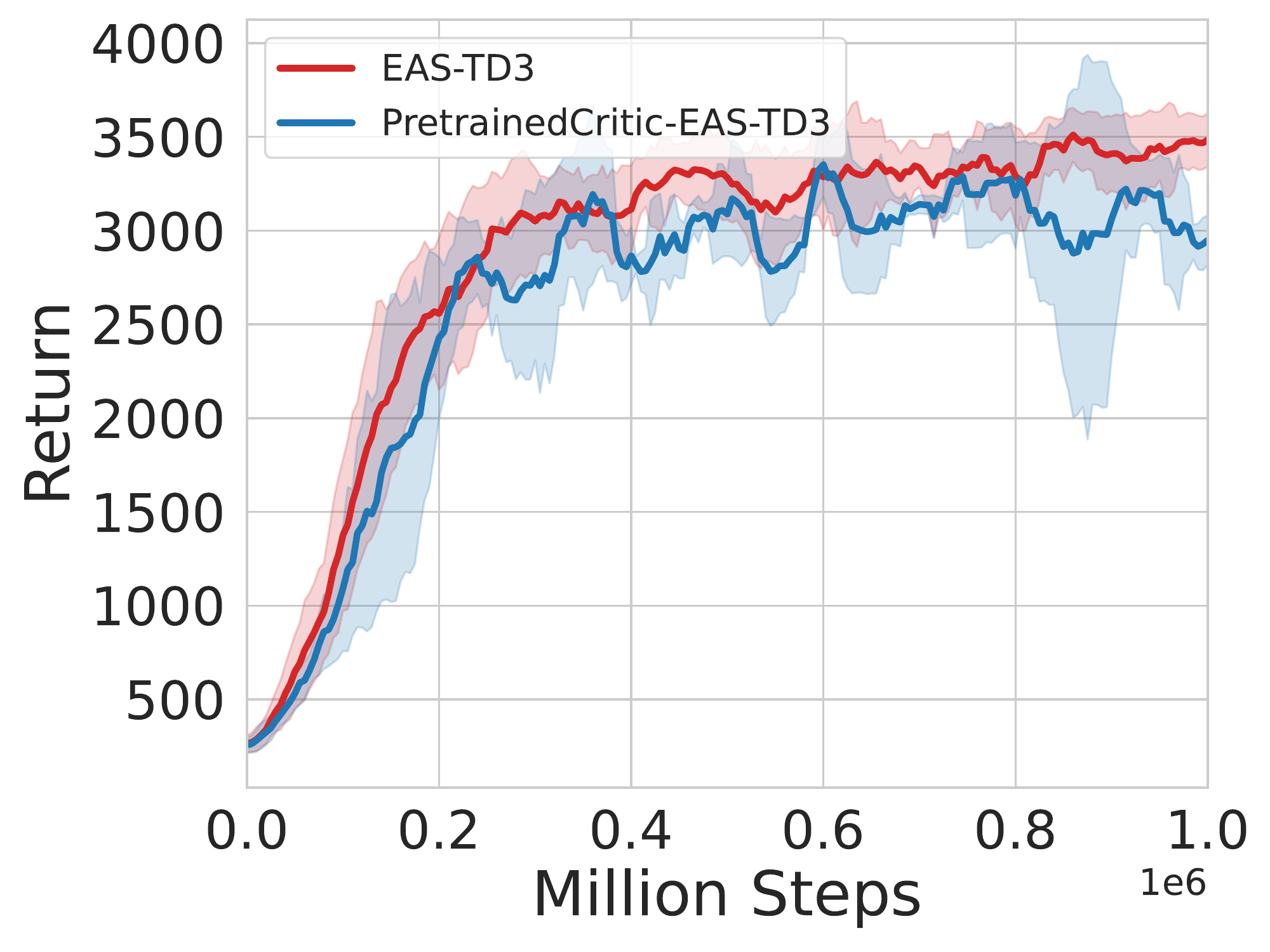}
    \caption{Hopper-v3}
    \label{fig:14c}
    \end{subfigure}
    \begin{subfigure}[t]{0.24\textwidth}
        \centering
        \includegraphics[width=\columnwidth]{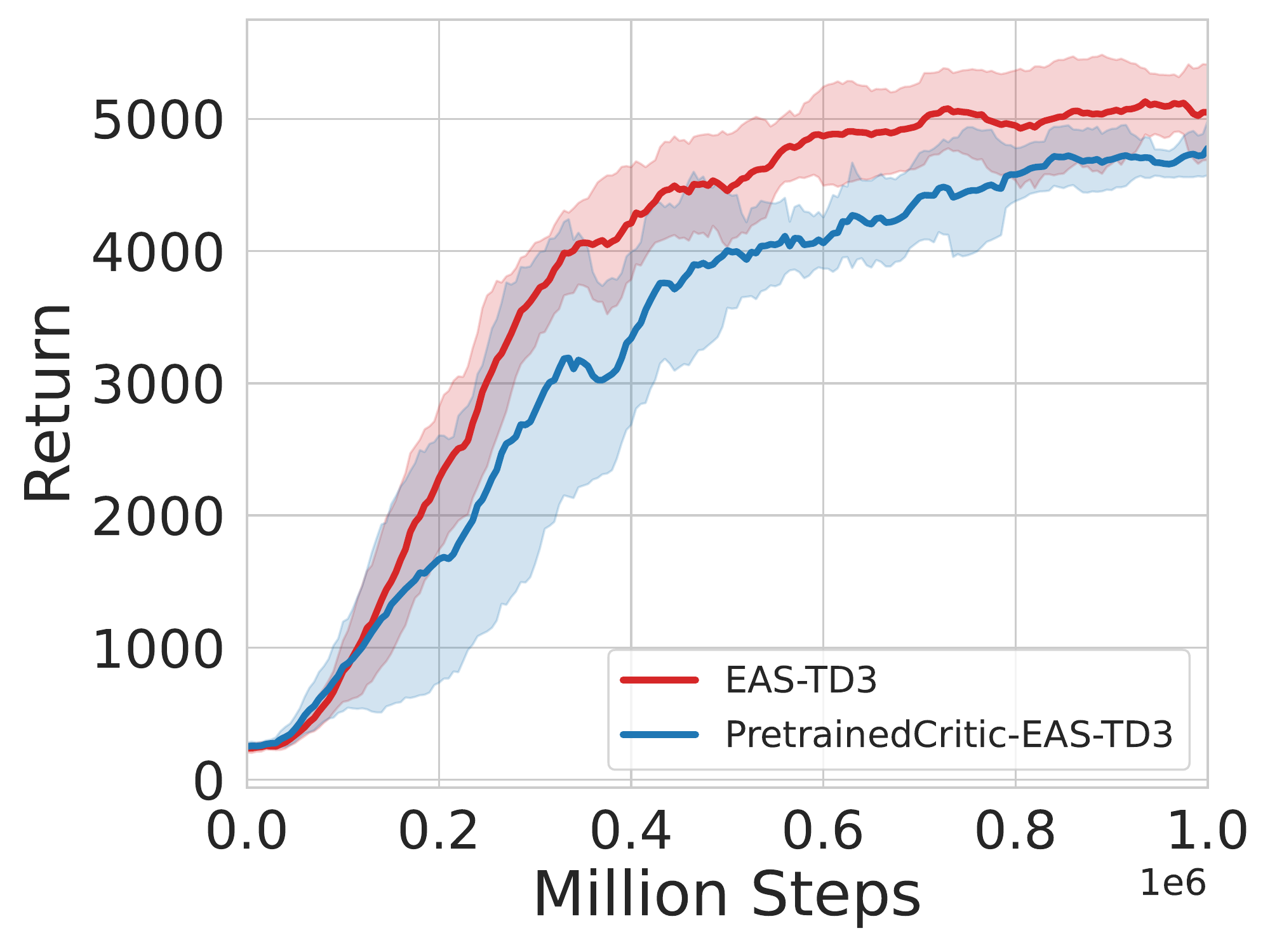}
    \caption{Walker2d-v3}
    \label{fig:14d}
    \end{subfigure}
\caption{Experiments~(mean $\pm$ standard deviation) on pretrained fitness evalautor.}
\label{fig:14}
    \vskip -0.2in
\end{figure}

\section{Additional Evaluation on Evolutionary Action}
\label{Appendix:More Evolutionary Action Evaluation}
Figures~\ref{fig:11},~\ref{fig:12},~\ref{fig:13} display the distribution of actions chosen by the policy and the corresponding evolutionary actions during the training process of Walker2d-v3 (from dimension 0 to dimension 5).
For presentation purposes, we rotate the figure to the vertical.
Each row represents the action space of every 100,000 timesteps.
In other words, we store all the actions chosen by the policy during training and the corresponding evolutionary actions and plot the distribution of the actions every 100,000 timesteps.
The red contour represents the current policy’s actions and the blue contour represents the corresponding evolutionary actions.
The performance of the policy will gradually improve over timesteps.
Therefore, the action distribution plot of each row in the figures indicates a higher expected reward than the previous row.
In Figure~\ref{fig:11}, we can see the evolutionary actions (the blue plot) show a trend of aggregation in the upper right corner at 500,000 to 600,000 timesteps.
Then the action distribution of the policy (the red plot)  gradually showed the same trend guided by evolutionary actions.
The same situation occurs in Figure~\ref{fig:13}, where evolutionary actions (the blue plot) show a trend of extending from the lower right to the left at 100,000 to 200,000 timesteps.
Then the action of the policy (the red plot)  gradually showed the same trend.
In short, evolutionary actions will predict the action space with a higher expected reward in advance and guide the current policy to move towards there.
Policy gradient and evolutionary gradient promote strategy learning in the same direction, ultimately improving the sample efficiency and the final performance.

\begin{figure*}[htb]
     \centering
     \begin{subfigure}[t]{0.50\textwidth}
         \centering
        \includegraphics[width=0.95\textwidth]{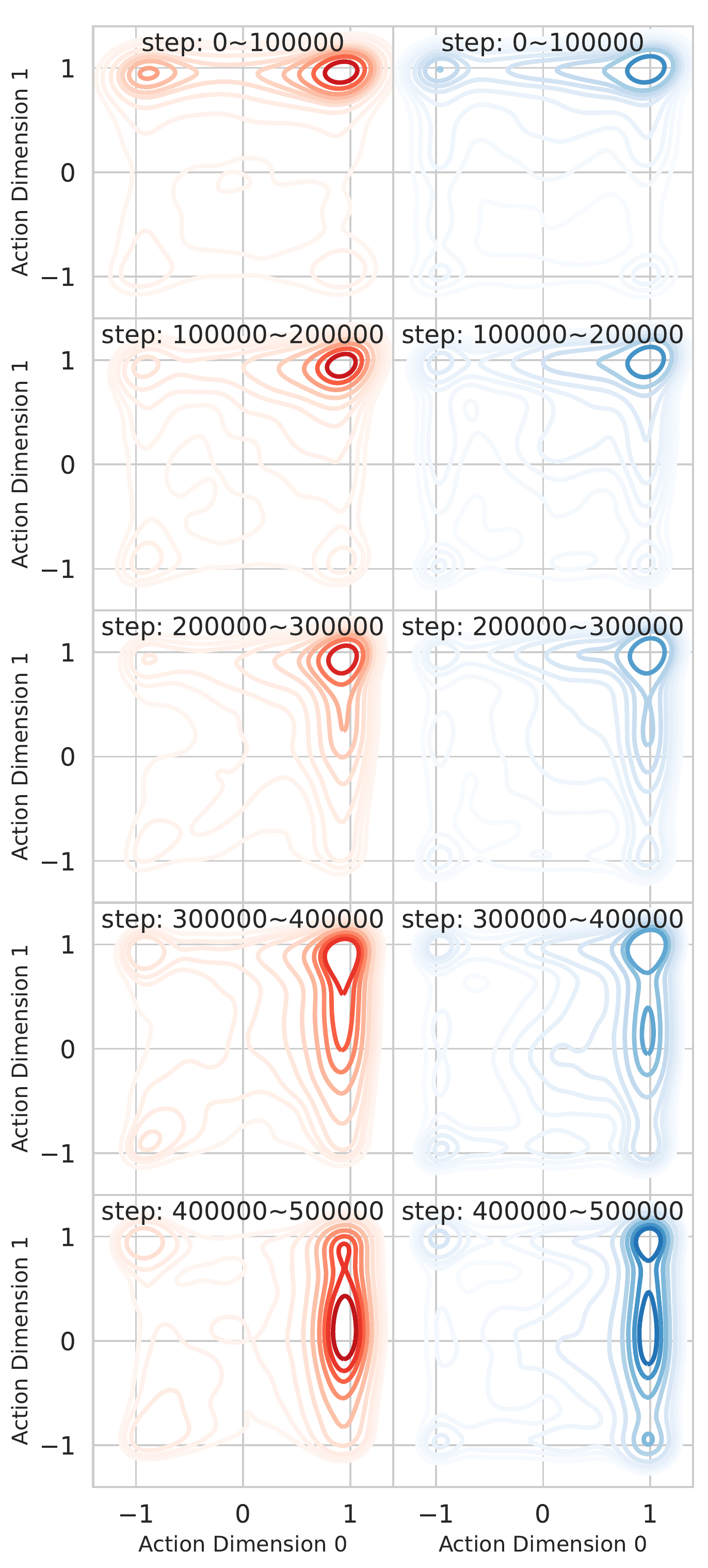}
         \label{fig:11a}
     \end{subfigure}%
     \hfill
     \begin{subfigure}[t]{0.50\textwidth}
         \centering
         \includegraphics[width=0.95\textwidth]{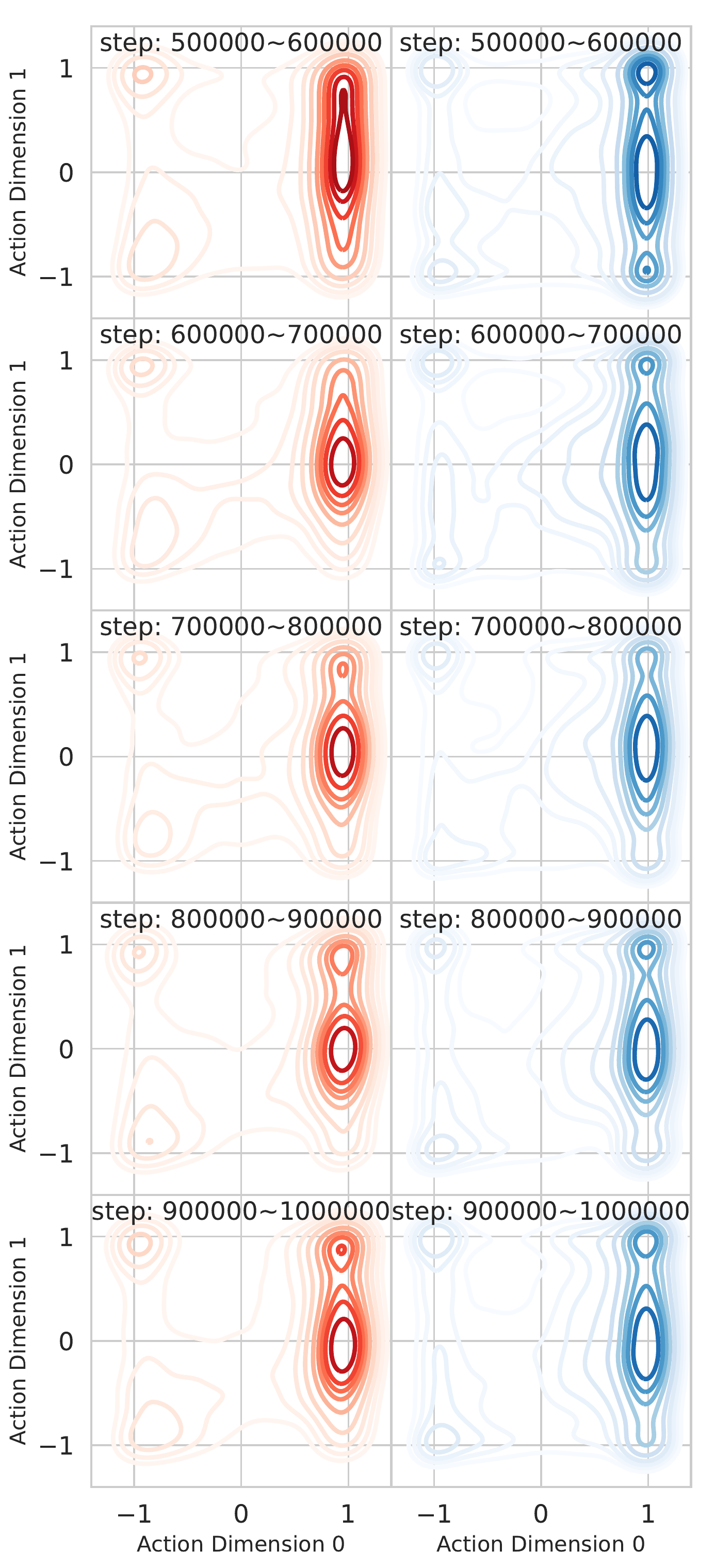}
         \label{fig:11b}
     \end{subfigure}%
     \hfill
    \caption{The zeroth and first dimensional distribution of the action in Walker2d-v3.
    The red contour represents the current policy’s actions and the blue contour represents the evolutionary actions.}
    \label{fig:11}
\end{figure*}

\begin{figure*}[htb]
     \centering
     \begin{subfigure}[t]{0.50\textwidth}
         \centering
        \includegraphics[width=0.95\textwidth]{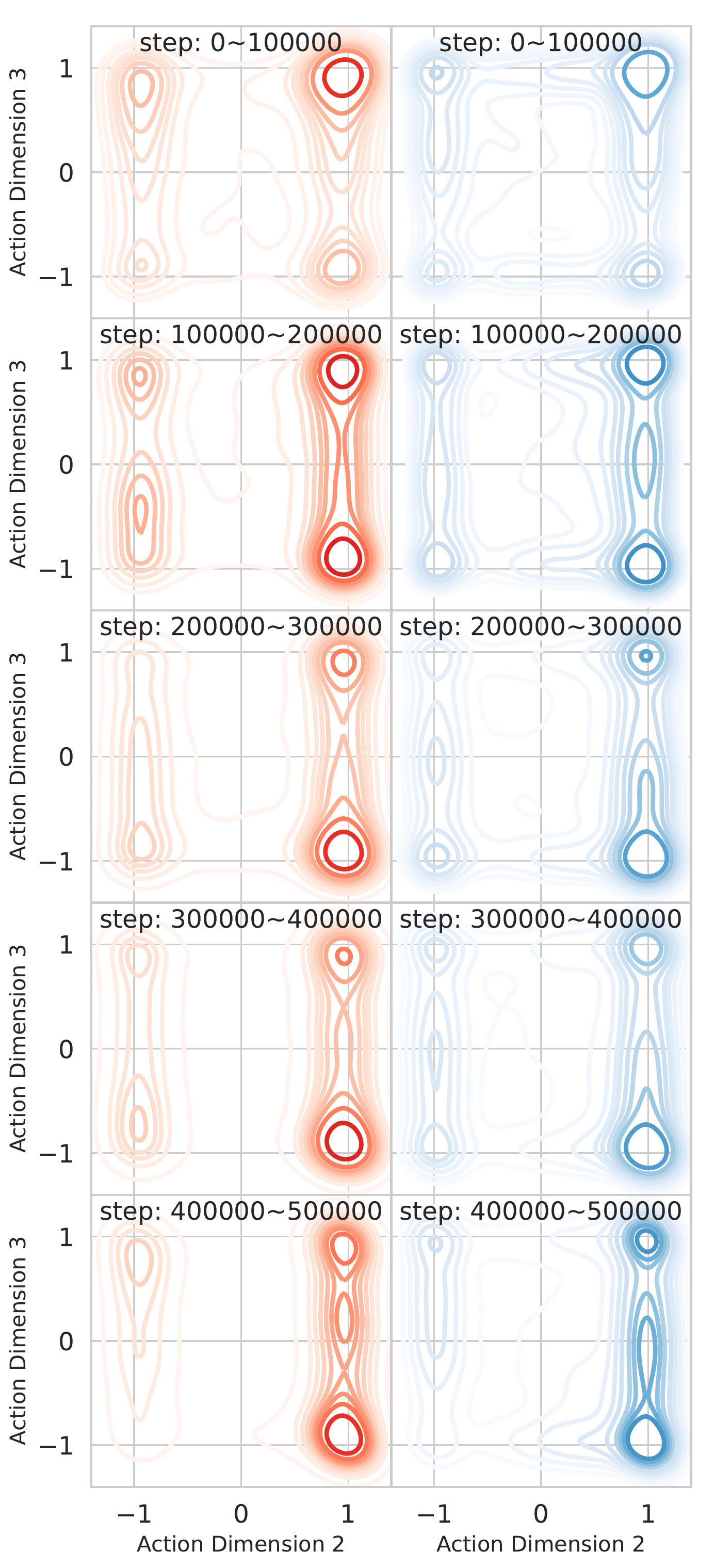}
         \label{fig:12a}
     \end{subfigure}%
     \hfill
     \begin{subfigure}[t]{0.50\textwidth}
         \centering
         \includegraphics[width=0.95\textwidth]{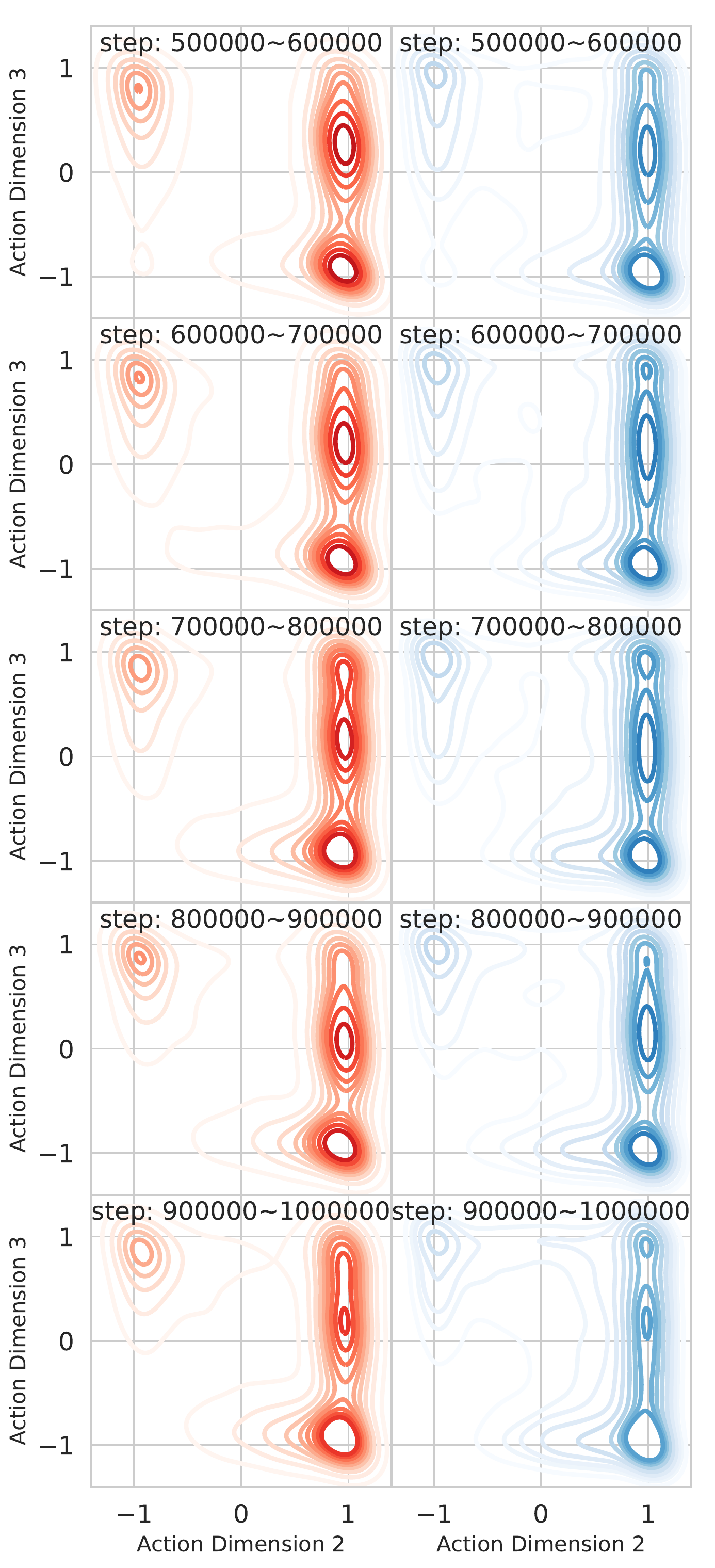}
         \label{fig:12b}
     \end{subfigure}%
     \hfill
    \caption{The second and third dimensional distribution of the action in Walker2d-v3.
    The red contour represents the current policy’s actions and the blue contour represents the evolutionary actions.}
    \label{fig:12}
\end{figure*}

\begin{figure*}[htb]
     \centering
     \begin{subfigure}[t]{0.50\textwidth}
         \centering
        \includegraphics[width=0.95\textwidth]{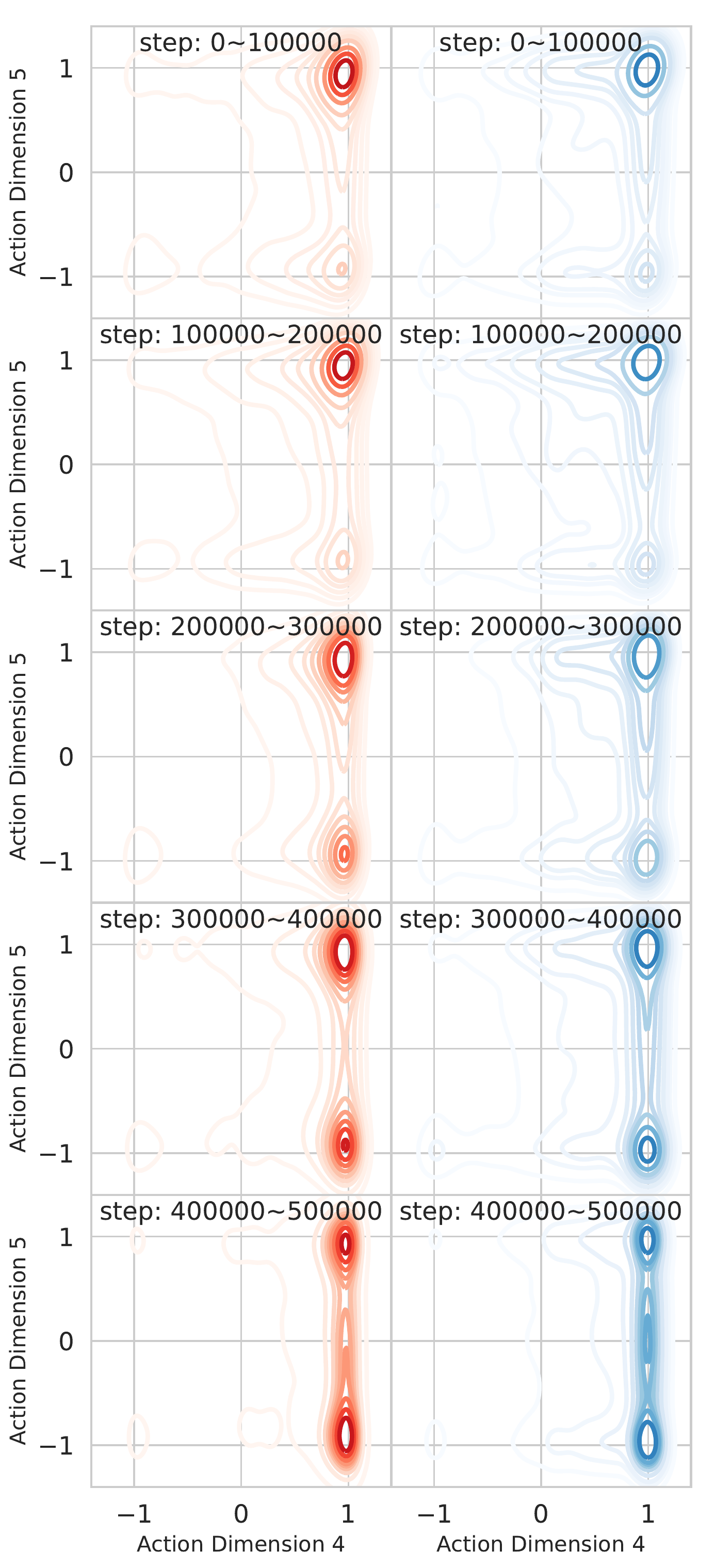}
         \label{fig:13a}
     \end{subfigure}%
     \hfill
     \begin{subfigure}[t]{0.50\textwidth}
         \centering
         \includegraphics[width=0.95\textwidth]{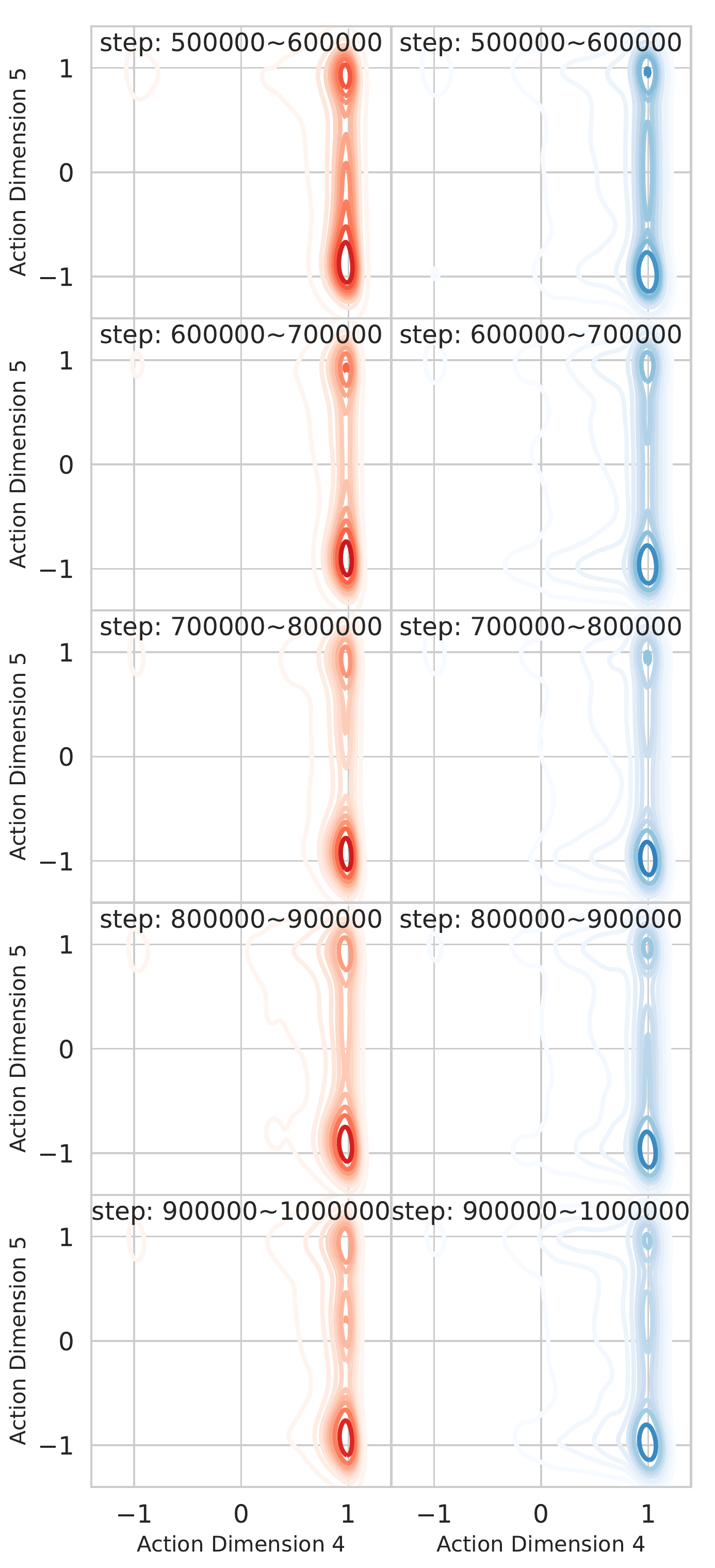}
         \label{fig:13b}
     \end{subfigure}%
     \hfill
    \caption{The fourth and fifth dimensional distribution of the action in Walker2d-v3.
    The red contour represents the current policy’s actions and the blue contour represents the evolutionary actions.}
    \label{fig:13}
\end{figure*}

\end{document}